\documentclass{article}


    \usepackage[final]{neurips_2023}

\usepackage{soul}
\usepackage{hyperref}
\usepackage{url}
\usepackage{booktabs}       
\usepackage{amsfonts}       
\usepackage{nicefrac}       
\usepackage{microtype}      
\usepackage{xcolor}         

\usepackage[utf8]{inputenc} 
\usepackage[T1]{fontenc}    

\usepackage{graphicx}
\usepackage{amsthm}
\usepackage{amssymb}
\usepackage{amsmath}
\usepackage{mathtools}
\usepackage{dirtytalk}
\usepackage{subcaption}
\usepackage{wrapfig}
\usepackage{multirow}
\usepackage{placeins}
\usepackage[noend,ruled,vlined,linesnumbered]{algorithm2e}  

\SetCommentSty{mycommfont}

\usepackage{color}
\usepackage{soul}
\usepackage{ulem}
\usepackage[capitalise]{cleveref}

\theoremstyle{definition}
\newtheorem{definition}{Definition}
\newtheorem{assumption}{Assumption}
\newtheorem{theorem}{Theorem}
\newtheorem{proposition}{Proposition}
\newtheorem{lemma}{Lemma}
\newtheorem{corollary}{Corollary}


\newcommand\cleanversion{}

\ifdefined\cleanversion
    \newcommand{\ra}[1]{#1}

    \newcommand{\rd}[1]{}
    
    \newcommand\eli[1]{}
    \newcommand\elirm[1]{}
    \newcommand\ido[1]{}
\else
    \newcommand{\ra}[1]{\hl{#1}}

    \newcommand{\rd}[1]{{\textcolor{red}{\sout{{#1}}}}}
    
    \definecolor{blue}{rgb}{0.435, 0.659, 0.863}
    \newcommand\eli[1]{\textcolor{blue}{\textbf{EM:} #1 }}
    \newcommand\elirm[1]{{\st{\textcolor{blue}{\textbf{EM:} #1 }}}}
    \definecolor{red}{rgb}{0.95, 0.5, 0.5}
    \newcommand\ido[1]{\textcolor{red}{\textbf{IG:} #1 }}
\fi



\newcommand{\repomaml}[1]{\href{\repobasemaml}{\underline{#1}}}
\newcommand{\repopearl}[1]{\href{\repobasepearl}{\underline{#1}}}
\newcommand{\repocesor}[1]{\href{\repobasecesor}{\underline{#1}}}
\newcommand{\repopaired}[1]{\href{\repobasepaired}{\underline{#1}}}
\newcommand{\repo}[1]{\href{\repobase}{\underline{#1}}}

\newcommand{\cempypi}[1]{\href{https://pypi.org/project/cross-entropy-method/}{\underline{#1}}}

\newcommand{\mytitle}{Train Hard, Fight Easy:\\Robust Meta Reinforcement Learning}

\definecolor{bright}{rgb}{0.8, 0.1, 0}

\title{\mytitle}

%

\author{
  Ido Greenberg \\ 
  Technion, Nvidia Research \\
  \texttt{gido@campus.technion.ac.il}
  \And
  Shie Mannor \\
  Technion, Nvidia Research \\
  \texttt{shie@ee.technion.ac.il}
  \And
  Gal Chechik \\
  Bar Ilan University, Nvidia Research \\
  \texttt{gchechik@nvidia.com}
  \And
  Eli Meirom \\
  Nvidia Research \\
  \texttt{emeirom@nvidia.com}
}

\begin{document}

\maketitle

\begin{abstract}
A major challenge of reinforcement learning (RL) in real-world applications is the variation between environments, tasks or clients.
Meta-RL (MRL) addresses this issue by learning a meta-policy that adapts to new tasks.
Standard MRL methods optimize the average return over tasks, but often suffer from poor results in tasks of high risk or difficulty. This limits system reliability since test tasks are not known in advance.
In this work, we define a robust MRL objective with a controlled robustness level.
Optimization of analogous robust objectives in RL is known to lead to both \textbf{biased gradients} and \textbf{data inefficiency}.
We prove that the gradient bias disappears in our proposed MRL framework.
The data inefficiency is addressed via the novel Robust Meta RL algorithm (\textbf{\textit{RoML}}).
RoML is a meta-algorithm that generates a robust version of any given MRL algorithm, by identifying and over-sampling harder tasks throughout training.
We demonstrate that RoML achieves robust returns on multiple navigation and continuous control benchmarks. 
\end{abstract}


\section{Introduction}
\label{sec:intro}

\begin{wrapfigure}[16]{R}{0.55\textwidth}
\vspace{-15pt}
\centering
\includegraphics[width=1.\linewidth]{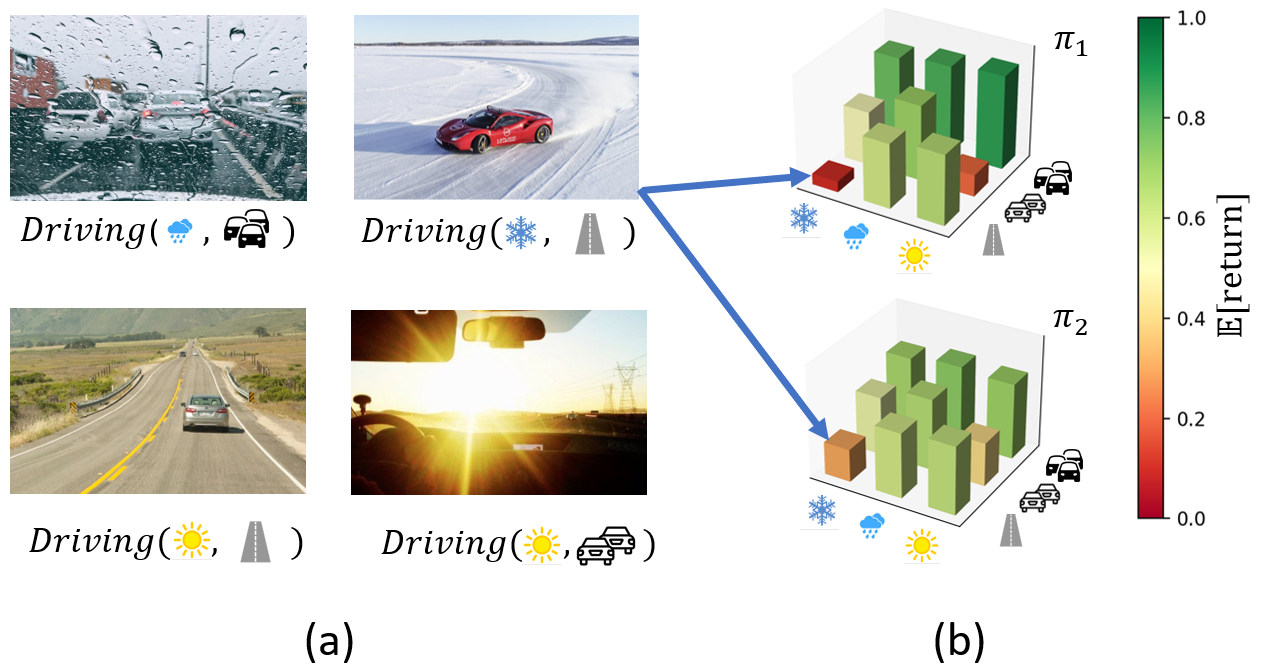}
\caption{\small (a) An illustration of driving tasks, characterized by various weather conditions and traffic density. (b) The returns of two meta-policies $\pi_1,\pi_2$ on these tasks. $\pi_1$ has a higher average return, but $\pi_2$ is more robust to high-risk tasks.
The task space is discretized only for illustration purposes.}
\vspace{-10pt}
\label{fig:problem_illustration}
\end{wrapfigure}

Reinforcement learning (RL) has achieved impressive results in a variety of applications in recent years, including cooling systems control \citep{rl_cooling} and conversational chatbots \citep{dialog_manager}.
A significant challenge in extending this success to mass production is the variation between instances of the problem, e.g., different cooling systems or different chatbot end-users.
Meta-RL (MRL) addresses this challenge by learning a ``meta-policy'' that quickly adapts to new tasks \citep{learning2learn,maml}.
In the examples above, MRL would maximize the average return of the adapted policy for a new cooling system or a new end-user.

However, optimizing the return of the average client might not suffice, as certain clients may still experience low or even negative returns.
If 10\% of the clients report poor performance, it may deter potential clients from adopting the new technology -- even if its average return is high.
This highlights the need for MRL systems that provide robust returns across tasks.
Robustness is further motivated by risk sensitivity in many natural RL applications, such as medical treatment~\citep{rl_healthcare} and driving~\citep{safe_rl_for_driving}.
For example, as illustrated in \cref{fig:problem_illustration}, an agent should drive safely at \textit{any} road profile -- even if at some roads the driving would be more cautious than necessary.

A common approach for risk-averse optimization is the max-min objective~\citep{task_robust}; in MRL, that would mean searching for a meta-policy with the highest expected-return in the worst possible task.
This expresses the most extreme risk aversion, which only attends to the one worst case out of all the possible outcomes. Furthermore, in certain problems, worst cases are inevitable (e.g., in certain medical treatments, a fatal outcome cannot be avoided), thus optimizing the minimum return might not provide any meaningful result.
A more general objective is the average return over the worst $\alpha$ quantiles ($0\le\alpha\le1$), also known as the Conditional Value-at-Risk (CVaR).
Notice that the CVaR is a generalization of both the mean (for $\alpha=1$) and the minimum (for $\alpha=0$).

CVaR is a coherent risk measure used for risk management in various fields~\citep{cvar_beyond_finance}, including banking regulation~\citep{expected_shortfall_banks} and RL~\citep{gcvar,cvar_options}.
In this work, we extend CVaR optimization to MRL, by replacing the standard MRL objective
%
\begin{equation}
\label{eq:obj_mean}
    \arg\!\max_\pi J^\theta(R), \quad J^\theta(R) = \mathbb{E}_{\tau,R} [R] ,
\end{equation}
with a CVaR objective, which measures the robustness of a policy to high-risk tasks:
%
\begin{equation}
\label{eq:obj_cvar}
    \arg\!\max_\pi J_\alpha^\theta(R), \quad J_\alpha^\theta(R) = \mathtt{CVaR}^\alpha_{\tau} \left[\mathbb{E}_{R}[R]\right] .
\end{equation}
In both equations, $\tau$ is a random task and $R$ is the random return of policy $\pi_\theta$ in $\tau$.
\ra{Intuitively, the CVaR return expresses robustness to the selected task, in analogy to robustness to the realized model in standard RL. To further motivate} \cref{eq:obj_cvar}, note that CVaR optimization in RL is equivalent to robust optimization under uncertain perturbations~\citep{cvar_robust}.


In \cref{sec:cvar_mrl}, we follow the standard approach of policy gradient (PG) for $\mathtt{CVaR}_\alpha$ optimization in RL, and apply it to MRL.
That is, for every batch of $N$ trajectories, we apply the learning step to the $\alpha N$ trajectories with the lowest returns.
This standard approach, CVaR-PG, is known to suffer from a major limitation: in an actor-critic framework, the critic leads to a biased gradient estimator -- to the extent that it may point to the opposite direction \citep{gcvar}.
This limitation is quite severe: many CVaR-PG implementations \citep{gcvar,cesor} rely on vanilla PG without a critic (REINFORCE, \citet{REINFORCE}); others pay the price of gradient bias -- in favor of advanced actor-critic methods that reduce the gradient variance \citep{cvar_trpo}.

This limitation is particularly concerning in \textit{meta} RL, where high complexity and noise require more sophisticated algorithms than REINFORCE.
Fortunately, \textbf{\cref{sec:cvar_mrl} eliminates this concern:
in MRL, in contrast to RL, the CVaR policy gradient is proven to remain unbiased regardless of the choice of critic.}
Hence, our proposed method -- CVaR Meta Learning (\textbf{\textit{CVaR-ML}}) -- can be safely applied on top of any MRL algorithm.
This makes CVaR-ML a \textit{meta-algorithm}: given an arbitrary MRL algorithm, CVaR-ML generates a robust version of it.

Nevertheless, in CVaR optimization methods, another source of gradients variance and sample inefficiency is the large proportion of data not being utilized.
Every iteration, we rollout trajectories for $N$ tasks, but only use $\alpha N$ of them for training.
To mitigate this effect, we introduce in \cref{sec:roml} the Robust Meta RL algorithm (\textbf{\textit{RoML}}).
RoML assumes that tasks can be selected during training. It learns to identify tasks with lower returns and over-samples them. 
By training on high-risk tasks, the meta-agent learns policies that are robust to them without discarding data. Hence, \textbf{RoML increases the sample efficiency by a factor of up to $\pmb{\alpha^{-1}}$}.
Unlike common adversarial methods, which search for the worst-case sample (task) that minimizes the return~\citep{task_robust}, RoML lets the user specify the desired level of robustness $\alpha$, and addresses the entire $\alpha$-tail of the return distribution.

We test our algorithms on several domains.
\cref{sec:kd} considers a navigation problem, where both CVaR-ML and RoML obtain better CVaR returns than their risk-neutral baseline. Furthermore, they learn substantially different navigation policies.
\cref{sec:mujoco} considers several continuous control environments with varying tasks. These environments are challenging for CVaR-ML, which entirely fails to learn. Yet, RoML preserves its effectiveness and consistently improves the robustness of the returns.
In addition, \cref{sec:sine} demonstrates that under certain conditions, RoML can be applied to supervised settings as well -- providing robust supervised meta-learning.

As a meta-algorithm, in each experiment RoML improves the robustness of its baseline algorithm -- using the same hyper-parameters as the baseline.
The \textit{average} return is also improved in certain experiments, indicating that even the risk-neutral objective of \cref{eq:obj_mean} may benefit from robustness.

\textbf{Contribution:}
(a) We propose a principled CVaR optimization framework for robust meta-RL.
While the analogous problem in standard RL suffers from biased gradients and data inefficiency, we (b) prove theoretically that MRL is immune to the former, and (c) address the latter via the novel Robust Meta RL algorithm (RoML).
Finally, (d) we demonstrate the robustness of RoML experimentally.


\section{Related Work}
\label{sec:related_work}


\textbf{Meta-RL} for the \textbf{average task} is widely researched, including methods based on gradients \citep{maml,mrl_for_structured_exploration}, latent memory \citep{varibad,pearl} and offline meta learning \citep{offline_mrl_exploration,offline_mrl}.
It is used for applications ranging from robotics \citep{mrl_robotics} to education \citep{mrl_education}.
Adversarial meta learning was studied for minimax optimization of the \textbf{lowest-return task}, in supervised meta learning \citep{task_robust,adversarial_meta_classification} and MRL \citep{adversarial_mrl}.
Other works studied the robustness of MRL to distributional shifts \citep{robust_off_policy_mrl,distributionally_adaptive}.
However, the \textbf{CVaR task} objective has not been addressed yet in the framework of MRL.



\textbf{Risk-averse RL.}
In \textit{standard} RL, risk awareness is widely studied for both safety~\citep{safe_RL,bfar} and robustness~\citep{bayesian_robust}.
CVaR specifically was studied using PG \citep{gcvar,cvar_trpo,cvar_options,risk_pg_convergence}, value iteration \citep{cvar_robust} and distributional RL \citep{distributional_rl_risk,automatic_risk_adaptation,drl_static_cvar}.
CVaR optimization was also shown equivalent to mean optimization under robustness \citep{cvar_robust}, motivating robust-RL methods \citep{robust_adversarial,cvar_adversarial}.
In this work, we propose a \textit{meta-learning} framework and algorithms for CVaR optimization, and point to both similarities and differences from the standard RL setting.

\textbf{Sampling.}
In \cref{sec:roml}, we use the cross-entropy method \citep{CE_tutorial} to sample high-risk tasks for training.
The cross-entropy method has been studied in standard RL for both optimization \citep{ce_for_policy_search,cem_gd} and sampling \citep{cesor}.
Sampling in RL was also studied for regret minimization in the framework of Unsupervised Environment Design~\citep{PAIRED,robust_PLR}; and for accelerated curriculum learning in the framework of Contextual RL~\citep{self_paced_1,self_paced_2}.
By contrast, we address MRL (where the current task is unknown to the agent, unlike Contextual RL), and optimize the CVaR risk measure instead of the mean.



\section{Preliminaries}
\label{sec:preliminaries}

\textbf{MRL.}
Consider a set of Markov Decision Processes (MDPs) $\{(S,A,\tau,\mathcal{P}_\tau,\mathcal{P}_{0,\tau},\gamma)\}_{\tau\in\Omega}$, 
where the distribution of transitions and rewards $\mathcal{P}_\tau$ and the initial state distribution $\mathcal{P}_{0,\tau}$ both depend on task $\tau \in \Omega$.
The task itself is drawn from a distribution $\tau\sim D$ over a general space $\Omega$, and is not known to the agent.
The agent can form a belief regarding the current $\tau$ based on the task history $h$,
which consists of repeating triplets of states, actions and rewards \citep{varibad}.
Thus, the meta-policy $\pi_\theta(a;s,h)$ ($\theta\in\Theta$) maps the current state $s\in S$ and the history $h\in \prod (S\times A\times \mathbb R)$ (consisting of state-action-reward triplets) to a probability distribution over actions.

A meta-rollout is defined as a sequence of $K \ge 1$ episodes of length $T\in\mathbb N$ over a single task $\tau$: $\Lambda = \{\{(s_{k,t},\,a_{k,t},\,r_{k,t})\}_{t=1}^T\}_{k=1}^K$.
For example, in a driving problem, $\tau$ might be a geographic area or type of roads, and $\Lambda$ a sequence of drives on these roads.
The return of the agent over a meta-rollout is defined as $R(\Lambda) = \frac{1}{K}\sum_{k=1}^K \sum_{t=0}^T \gamma^{t} r_{k,t}$, where $r_{k,t}$ is the (random variable) reward at step $t$ in episode $k$.
Given a task $\tau$ and a meta-policy $\pi_\theta$, we denote by $P_\tau^\theta(x)$ the conditional PDF of the return $R$.  
With a slight abuse of notation, we shall use $P_\tau^\theta(\Lambda)$ to also denote the PDF of the meta-rollout itself.
The standard MRL objective is to maximize the expected return $J^\theta(R) = \mathbb{E}_{\tau,R} [R]$. 


While the meta policy $\pi_\theta(s,a;h)$ is history-dependent, it can still be learned using standard policy gradient (PG) approaches, by considering $h$ as part of an extended state space $\tilde{s}=(s,h)\in\tilde{S}$.
Then, the policy gradient can be derived directly: 
\begin{align}
\label{eq:MPG}
    \nabla_\theta J^\theta(R) = \int_\Omega D(z) \int_{-\infty}^{\infty} (x-b) \nabla_\theta P_z^\theta(x) \cdot dx \cdot dz ,
\end{align}
where $D$ is a probability measure over the task space $\Omega$, $P_z^\theta(x)$ is the PDF of $R$ (conditioned on $\pi_\theta$ and $\tau=z$), and $b$ is any arbitrary baseline that is independent of $\theta$~\citep{finite_pg}.  
While a direct gradient estimation via Monte Carlo sampling is often noisy, its variance can be reduced by an educated choice of baseline $b$. In the common actor-critic framework \mbox{\citep{A2C}}, a learned value function $b=V(s;h)$ is used.
This approach is used in many SOTA algorithms in deep RL, e.g., PPO~\citep{ppo}; and by proxy, in MRL algorithms that rely on them, e.g., VariBAD~\citep{varibad}.

A major challenge in MRL is the extended state space $\tilde{S}$, which now includes the whole task history.
Common algorithms handle the task history via a low-dimensional embedding that captures transitions and reward function \citep{varibad}; or using additional optimization steps w.r.t.~task history \citep{maml}.
Our work does not compete with such methods, but rather builds upon them: our methods operate as meta-algorithms that run on top of existing MRL baselines.





\textbf{CVaR-PG.}
Before moving on to CVaR optimization in MRL, we first recap the common PG approach for standard (non-meta) RL.
For a random variable $X$ and $\alpha$-quantile $q_\alpha(X)$, the CVaR is defined as $\mathtt{CVaR}_\alpha(X) = \mathbb{E}\left[ X \,|\, X\le q_\alpha(X) \right]$.
For an MDP $(S,A,P,P_0,\gamma)$, the CVaR-return objective is $ \tilde{J}_\alpha^\theta(R) = \mathtt{CVaR}^{\alpha}_{R\sim P^\theta}[R] = \int_{-\infty}^{q_\alpha^\theta(R)} x \cdot P^\theta(x) \cdot dx $, whose corresponding policy gradient is~\citep{gcvar}:
\begin{equation}
\label{eq:PG_rl}
    \nabla_\theta \tilde{J}_\alpha^\theta(R) = \int_{-\infty}^{q_\alpha^\theta(R)} (x - q_\alpha^\theta(R)) \cdot \nabla_\theta P^\theta(x) \cdot dx.
\end{equation}
Given a sample of $N$ trajectories $\{\{(s_{i,t},\,a_{i,t})\}_{t=1}^T\}_{i=1}^N$ with returns $\{R_i\}_{i=1}^N$, the policy gradient can be estimated by~\citep{gcvar,cvar_trpo}:  
\begin{align}
\label{eq:PG_est_rl}
\begin{split}
    \nabla_\theta &\tilde{J}_\alpha^\theta(R) \approx 
    \frac{1}{\alpha N} \sum_{i=1}^N \pmb{1}_{R_i\le \hat{q}_\alpha^\theta} \cdot (R_i - \hat{q}_\alpha^\theta) \cdot \sum_{t=1}^T \nabla_\theta \log \pi_\theta(a_{i,t};s_{i,t}),
\end{split}
\end{align}
where $\hat{q}_\alpha^\theta$ is an estimator of the current return quantile. 

Notice that in contrast to mean-PG, in CVaR optimization the baseline \textit{cannot} follow an arbitrary critic, but should  approximate the total return quantile $q_\alpha^\theta(R)$.
\citet{gcvar} showed that any baseline $b \ne q_\alpha^\theta(R)$ inserts bias to the CVaR gradient estimator, potentially to the level of pointing to the opposite direction (as discussed in \cref{app:pg_rl} and \cref{fig:pg}).
As a result, CVaR-PG methods in RL either are limited to basic REINFORCE with a constant baseline~\citep{cesor}, or use a critic for variance reduction at the cost of biased gradients~\citep{cvar_trpo}.



Another major source of gradient-variance in CVaR-PG is its reduced sample efficiency: notice that \cref{eq:PG_est_rl} only exploits $\approx \alpha N$ trajectories out of each batch of $N$ trajectories (due to the term $\pmb{1}_{R_i\le \hat{q}_\alpha^\theta}$), hence results in estimation variance larger by a factor of $\alpha^{-1}$.


\section{CVaR Optimization in Meta-Learning}
\label{sec:cvar_mrl}

In this section, we show that \textbf{unlike standard RL, CVaR-PG in MRL permits a flexible baseline \textit{without} presenting biased gradients}.
Hence, policy gradients for \textit{CVaR} objective in \textit{MRL} is substantially different from both \textit{mean}-PG in MRL (\cref{eq:MPG}) and CVaR-PG in \textit{RL} (\cref{eq:PG_rl}). 


To derive the policy gradient, we first define the policy value per task and the tail of tasks.

\begin{definition}
\label{def:omega_alpha}
The value of policy $\pi_\theta$ in task $\tau$ is denoted by $ V_\tau^\theta = \mathbb{E}_{R\sim P_\tau^\theta}[R] $. Notice that $V_\tau^\theta$ depends on the random variable $\tau$.
We define the $\alpha$-tail of tasks w.r.t.~$\pi_\theta$ as the tasks with the lowest values: $\Omega_\alpha^\theta = \{ z\in\Omega \,|\, V_z^\theta \le q_\alpha(V_\tau^\theta) \}$.
\end{definition}


\begin{assumption}
\label{assumption}
To simplify integral calculations, we assume that for any $z\in\Omega$ and $\theta\in\Theta$, $R$ is a continuous random variable (i.e., its conditional PDF $P_z^\theta(x)$ has no atoms).  
We also assume that $v(z) = V_z^\theta$ is a continuous function for any $\theta\in\Theta$.
\end{assumption}

\begin{theorem}[Meta Policy Gradient for CVaR]
\label{theorem:mpg}
Under \cref{assumption}, the policy gradient of the CVaR objective in \cref{eq:obj_cvar} is
\begin{equation}
\label{eq:cvar_mpg}
\nabla_\theta J_\alpha^\theta(R) = \int_{\Omega_\alpha^\theta} D(z) \int_{-\infty}^{\infty} (x-b) \nabla_\theta P_z^\theta(x) \cdot dx \cdot dz ,
\end{equation}
where $b$ is \textit{any} arbitrary baseline independent of $\theta$.
\end{theorem}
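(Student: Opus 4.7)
The plan is to exploit the fact that the outer CVaR is taken over the task-values $V_\tau^\theta=\mathbb{E}_R[R]$ rather than directly over trajectory returns, so that once we have peeled off the task-level quantile, what remains inside each task is an ordinary mean-return policy gradient, for which an arbitrary $\theta$-independent baseline is admissible. The main technical nuisance is that both the domain $\Omega_\alpha^\theta$ and the quantile $q_\alpha^\theta:=q_\alpha(V_\tau^\theta)$ depend on $\theta$, so I want to rewrite $J_\alpha^\theta$ in a form where this dependence can be differentiated cleanly.

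First I would use the Rockafellar--Uryasev identity, or equivalently the algebraic decomposition
\begin{equation*}
x\,\mathbf{1}_{x\le q}=q\,\mathbf{1}_{x\le q}-(q-x)^+,
\end{equation*}
applied pointwise to $V_\tau^\theta$ with $q=q_\alpha^\theta$. Since $v(z)=V_z^\theta$ is continuous and the marginal distribution of $V_\tau^\theta$ has no atom at the quantile (inherited from the continuity assumption together with the smoothness of the task measure $D$), $\mathbb{P}(\tau\in\Omega_\alpha^\theta)=\alpha$ exactly, and one obtains the smooth representation
\begin{equation*}
\alpha J_\alpha^\theta = \alpha\,q_\alpha^\theta - \mathbb{E}_\tau\bigl[(q_\alpha^\theta-V_\tau^\theta)^+\bigr].
\end{equation*}
This form is what makes the proof work: the domain $\Omega_\alpha^\theta$ has been absorbed into a Lipschitz function of $V_\tau^\theta$, so we can differentiate under the integral.

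Next I would take $\nabla_\theta$ on both sides. The hinge $(q_\alpha^\theta-V_\tau^\theta)^+$ is a.s.\ differentiable in $\theta$ (by the atomless assumption on $V_\tau^\theta$), giving $\mathbf{1}_{V_\tau^\theta<q_\alpha^\theta}\,(\nabla_\theta q_\alpha^\theta-\nabla_\theta V_\tau^\theta)$. Taking expectations and using $\mathbb{P}(V_\tau^\theta\le q_\alpha^\theta)=\alpha$, the two $\nabla_\theta q_\alpha^\theta$ contributions cancel, leaving
\begin{equation*}
\nabla_\theta J_\alpha^\theta \;=\; \tfrac{1}{\alpha}\int_{\Omega_\alpha^\theta} D(z)\,\nabla_\theta V_z^\theta\,dz.
\end{equation*}
This is the crucial step, and the one I expect to require the most care: I would need to justify interchange of $\nabla_\theta$ and $\mathbb{E}_\tau$ (standard under mild regularity of $P_z^\theta$ in $\theta$) and to argue that the contribution of the measure-zero boundary $\{V_z^\theta=q_\alpha^\theta\}$ indeed vanishes, for which the continuity of $v(z)$ in Assumption 1 is precisely what is needed.

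Finally, I would expand the inner gradient using the standard within-task identity
\begin{equation*}
\nabla_\theta V_z^\theta \;=\; \int_{-\infty}^{\infty} x\,\nabla_\theta P_z^\theta(x)\,dx \;=\; \int_{-\infty}^{\infty} (x-b)\,\nabla_\theta P_z^\theta(x)\,dx,
\end{equation*}
where the baseline $b$ can be subtracted because $\int \nabla_\theta P_z^\theta(x)\,dx = \nabla_\theta 1 = 0$ whenever $b$ is independent of $\theta$. Substituting back and absorbing the $1/\alpha$ factor into the definition of $J_\alpha^\theta$ (consistent with the unnormalized convention already used by the authors in \eqref{eq:PG_rl}) yields exactly \eqref{eq:cvar_mpg}. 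The conceptual takeaway, which I would flag in the proof, is that the flexible baseline enters only through the \emph{inner} mean-return gradient of each sampled task in $\Omega_\alpha^\theta$, while the outer CVaR structure over tasks is handled entirely by restricting the domain of integration -- which is precisely why CVaR-MRL sidesteps the baseline-bias pathology that afflicts CVaR-PG in standard RL.
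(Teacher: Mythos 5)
Your proof is correct, but it takes a genuinely different route from the paper's. The paper works directly with the moving domain $\Omega_\alpha^\theta$: it characterizes the boundary $\partial\Omega_\alpha^\theta$ (showing $V_z^\theta=q_\alpha(V_\tau^\theta)$ there), applies the Reynolds transport theorem to both the probability-conservation identity $\int_{\Omega_\alpha^\theta}D(z)\,dz\equiv\alpha$ and to $J_\alpha^\theta$ itself, and cancels the resulting surface integrals using $\nabla_\theta D(z)\equiv 0$; the baseline is then removed exactly as in your last step via $\int\nabla_\theta P_z^\theta(x)\,dx=0$. You instead absorb the $\theta$-dependent domain into the hinge representation $\alpha J_\alpha^\theta=\alpha q_\alpha^\theta-\mathbb{E}_\tau[(q_\alpha^\theta-V_\tau^\theta)^+]$ and let the two $\nabla_\theta q_\alpha^\theta$ contributions cancel -- an envelope/Danskin-style argument in the spirit of Rockafellar--Uryasev. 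Your route buys generality and economy: it needs no geometric structure on $\Omega$ (no surface measures, normal vectors, or boundary velocities, which the paper's RTT argument implicitly requires), and it makes explicit the regularity actually used, namely that the law of $V_\tau^\theta$ has no atom at the quantile and that $q_\alpha^\theta$ varies differentiably with $\theta$ -- assumptions the paper also relies on but leaves tacit (its $\int_{\Omega_\alpha^\theta}D(z)\,dz\equiv\alpha$ and the smooth motion of $\partial\Omega_\alpha^\theta$ are the same requirements in geometric clothing). What the paper's route buys is a direct structural parallel with the biased RL case of \cref{eq:conservation_rl}, making visually explicit that the bias term there is a boundary flux that vanishes in MRL precisely because $\nabla_\theta D(z)\equiv 0$. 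You also correctly flag the normalization issue: the paper states the theorem in the unnormalized tail-integral convention, so your $1/\alpha$ factor is a bookkeeping difference, not a discrepancy.
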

\begin{proof}[Proof intuition (the formal proof is in \cref{app:pg})]
\ra{In RL}, the CVaR objective measures the $\alpha$ lowest-return trajectories.
When the policy is updated, the cumulative probability of these trajectories changes and no longer equals $\alpha$. Thus, the new CVaR calculation must add or remove trajectories (as visualized in \cref{fig:pg} in the appendix). This adds a term in the gradient calculation, which causes the bias in CVaR-PG.
By contrast, in MRL, the CVaR measures the $\alpha$ lowest-return \textit{tasks} $\Omega_\alpha^\theta$. Since the task distribution does not depend on the policy, the probability of these tasks is not changed -- but only the way they are handled by the agent (\cref{fig:mpg}). Thus, no bias term appears in the calculation.
Note that $\Omega_\alpha^\theta$ does change throughout the meta-learning -- due to changes in task \textit{values} (rather than task probabilities); this is a different effect and is not associated with gradient bias.
\end{proof}

According to \cref{theorem:mpg}, the CVaR PG in MRL permits \textit{any} baseline $b$.
As discussed in \cref{sec:preliminaries}, this flexibility is necessary, for example, in any actor-critic framework.

To estimate the gradient from meta-rollouts of the tail tasks, we transform the integration of \cref{eq:cvar_mpg} into an expectation:
\begin{corollary}
\label{corollary:mpg}
    \cref{eq:cvar_mpg} can be written as
    \begin{align}
    \begin{split}
    \label{eq:cvar_mpg2}
    &\nabla_\theta J_\alpha^\theta(R) = \mathbb{E}_{\tau \sim D}\left[ \mathbb{E}_{\Lambda \sim P_\tau^\theta} \left[ g(\Lambda) \right]
\ \Big|\ V_\tau^\theta \le q_\alpha(V_\tau^\theta) \right] ,
    \end{split}
    \end{align}
    where $g(\Lambda) = (R(\Lambda)-b) \sum_{\substack{1\le k\le K,\\1\le t\le T}} \nabla_\theta \log \pi_\theta (a_{k,t};\,\tilde{s}_{k,t})$; and $\tilde{s}_{k,t} = (s_{k,t}, h_{k,t})$ is the extended state (that includes all the task history $h_{k,t}$ until trajectory $k$, step $t$).
\end{corollary}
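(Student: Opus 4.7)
The plan is to transform the integral in \cref{eq:cvar_mpg} into the claimed conditional expectation via two independent manipulations: (i) rewrite the inner integral over return values as an expectation over meta-rollouts using the standard log-derivative (REINFORCE) trick, and (ii) re-express the outer integral, restricted to $\Omega_\alpha^\theta$, as an expectation conditioned on the event $V_\tau^\theta\le q_\alpha(V_\tau^\theta)$. Neither step is expected to be deep: the first is a direct adaptation of the usual trajectory-level rewriting of a return-level policy gradient, and the second follows from the definition of $\Omega_\alpha^\theta$ in \cref{def:omega_alpha}.

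For step (i), first I would apply $\nabla_\theta P_z^\theta(x) = P_z^\theta(x)\,\nabla_\theta \log P_z^\theta(x)$ to the inner integral, and then use the pushforward relation between the return density and the meta-rollout density: for any integrable $f$,
\[
\int_{-\infty}^{\infty} f(x)\, P_z^\theta(x)\,dx \;=\; \int f(R(\Lambda))\, P_z^\theta(\Lambda)\, d\Lambda .
\]
Applied with $f(x)=(x-b)\,\nabla_\theta \log P_z^\theta(x)$, and using the meta-rollout factorization
\[
P_z^\theta(\Lambda) \;=\; \mathcal{P}_{0,z}(s_{1,1}) \prod_{k,t} \pi_\theta(a_{k,t};\tilde{s}_{k,t})\,\mathcal{P}_z(s_{k,t+1},r_{k,t}\,|\,s_{k,t},a_{k,t}),
\]
the only $\theta$-dependence is through the policy, so $\nabla_\theta \log P_z^\theta(\Lambda)=\sum_{k,t}\nabla_\theta \log \pi_\theta(a_{k,t};\tilde{s}_{k,t})$. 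This gives
\[
\int_{-\infty}^{\infty}(x-b)\nabla_\theta P_z^\theta(x)\,dx \;=\; \mathbb{E}_{\Lambda\sim P_z^\theta}\!\left[g(\Lambda)\right] .
\]

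For step (ii), I would rewrite the outer integral as an expectation against an indicator: $\int_{\Omega_\alpha^\theta}D(z)h(z)\,dz = \mathbb{E}_{\tau\sim D}\!\left[h(\tau)\,\pmb 1_{\tau\in\Omega_\alpha^\theta}\right]$. By \cref{def:omega_alpha}, the event $\{\tau\in\Omega_\alpha^\theta\}$ is exactly $\{V_\tau^\theta\le q_\alpha(V_\tau^\theta)\}$, and the continuity of $v(z)=V_z^\theta$ in \cref{assumption} ensures this event has probability $\alpha$ under $D$ with no atomic mass on the boundary. Combining these two steps with $h(z)=\mathbb{E}_{\Lambda\sim P_z^\theta}[g(\Lambda)]$ yields \cref{eq:cvar_mpg2} (up to the $\alpha^{-1}$ normalization implicit in the paper's convention for $J_\alpha^\theta$).

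The only subtle point I foresee is the interchange of $\nabla_\theta$ and the $dx$-integral inherited from \cref{theorem:mpg}: once we move to the trajectory representation, the gradient acts on $\log\pi_\theta$ pointwise along $\Lambda$, which is well-defined under standard policy regularity and the continuity assumption on $R$ in \cref{assumption}. Crucially, in contrast to the RL case of \cref{eq:PG_rl}, no boundary term enters here because \cref{theorem:mpg} has already absorbed the quantile dependence into the (policy-independent) restriction on $\tau$, so there is nothing further to justify beyond the routine pushforward and indicator rewriting.
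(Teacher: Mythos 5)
Your proposal is correct and follows essentially the same route as the paper's (very terse) proof: substitute the meta-rollout PDF for the return PDF, apply the log-derivative trick so that only the $\pi_\theta$ factors survive the gradient, and read the restricted outer integral as a conditional expectation over $\{V_\tau^\theta \le q_\alpha(V_\tau^\theta)\}$. Your remark about the $\alpha^{-1}$ normalization is a fair catch of a constant-factor looseness in the paper's convention for $J_\alpha^\theta$, but it does not affect the argument.
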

\begin{proof}
We apply the standard log trick $\nabla_\theta P_z^\theta = P_z^\theta \cdot \nabla_\theta \log P_z^\theta$ to \cref{eq:cvar_mpg}, after substituting the meta-rollout PDF:
$P_z^\theta(\Lambda) = \prod_{k=1}^K\left[ P_{0,z}(s_{k,0}) \cdot \prod_{t=1}^T P_z (s_{k,t+1},r_{k,t}\,|\,s_{k,t},a_{k,t}) \pi_\theta(a_{k,t};\,\tilde{s}_{k,t}) \right] $ .
%
%
\end{proof}

For a practical Monte-Carlo estimation of \cref{eq:cvar_mpg2}, given a task $z_i$, we need to estimate whether $V_{z_i}^\theta \le q_\alpha(V_\tau^\theta)$.
To estimate $V_{z_i}^\theta$, we can generate $M$ i.i.d meta-rollouts with returns $\{R_{i,m}\}_{m=1}^M$, and calculate their average return $\hat{V}_{z_i}^\theta = R_i = \sum_{m=1}^M R_{i,m}/M$.
Then, the quantile $q_\alpha(V_\tau^\theta)$ can be estimated over a batch of tasks $\hat{q}_\alpha = q_\alpha(\{R_i\}_{i=1}^N)$.
If $\hat{V}_{z_i}^\theta \le \hat{q}_\alpha$, we use \textit{all} the meta-rollouts of $z_i$ for the gradient calculation (including meta-rollouts that by themselves have a higher return $R_{i,m} > \hat{q}_\alpha$).
Notice that we use $M$ i.i.d meta-rollouts, each consisting of $K$ episodes (the episodes within a meta-rollout are \textit{not} independent, due to agent memory).

Putting it together, we obtain the sample-based gradient estimator of \cref{eq:cvar_mpg2}: 
\begin{align}
\label{eq:PG_est_mrl}
\begin{split}
    &\nabla_\theta J_\alpha^\theta(R) \approx 
    \frac{1}{\alpha N} \sum_{i=1}^N \pmb{1}_{R_i\le \hat{q}_\alpha^\theta} \sum_{m=1}^M g_{i,m}, \\
    &g_{i,m} \coloneqq (R_{i,m} - b) \sum_{k=1}^K \sum_{t=1}^T \nabla_\theta \log \pi_\theta(a_{i,m,k,t};\,\tilde{s}_{i,m,k,t}) ,  
\end{split}
\end{align}
where $a_{i,m,k,t},\,\tilde{s}_{i,m,k,t}$ are the action and the state-and-history at task $i$, meta-rollout $m$, trajectory $k$ and step $t$.

\begin{wrapfigure}[16]{L}{0.52\textwidth}
\vspace{-1pt}
\hspace{3pt}
\begin{minipage}{\linewidth}
\begin{algorithm}[H]
\caption{CVaR Meta Learning (CVaR-ML)}
\label{algo:cvarml}
\DontPrintSemicolon
\SetAlgoNoLine
\SetNoFillComment

{\bf Input}: Meta-learning algorithm (\cref{def:mrl_baseline}); robustness level $\alpha\in(0,1]$; task distribution $D$; $N$ tasks per batch; $M$ meta-rollouts per task\;
 \BlankLine
 \While{not finished training}{
 
	\tcp{Sample tasks}
	Sample $\{z_{i}\}_{i=1}^{N} \sim D$\; \label{line-ref:sample}

    \tcp{Run meta-rollouts}
    $\{\{\Lambda_{i,m}\}_{m=1}^M\}_{i=1}^N \leftarrow \text{rollouts}(\{z_{i}\}_{i=1}^{N},\ M)$\; \label{line-ref:rollout}
    $ R_{i,m} \leftarrow \text{return}(\Lambda_{i,m}), \quad \forall i,\,m $\;

	\tcp{Compute sample quantile}
    $R_i \leftarrow \text{mean}(\{R_{i,m}\}_{m=1}^M), \quad \forall i $\; 
	$\hat{q}_\alpha \leftarrow \text{quantile}(\{R_i\}_{i=1}^N,\, \alpha)$\;
	
    \tcp{Meta-learning algorithm train step}
	$\mathtt{ML} \big(\ \{ \Lambda_{i,m} \ |\  R_i \le \hat{q}_\alpha, \ 1\le m\le M \} \ \big)$\; \label{line-ref:learn}
 }
\end{algorithm}
\end{minipage}
\end{wrapfigure}

The procedure described above follows the principles of CVaR-PG in (non-meta) RL, as the learning rule is only applied to the tail of the sampled batch.
However, in MRL we consider a batch of tasks rather than a batch of trajectories. As discussed in \cref{theorem:mpg} and its proof, this distinction has a substantial impact on the gradient and the resulting algorithm.
Specifically, \cref{eq:PG_est_mrl} allows for greater flexibility than \cref{eq:PG_rl}, as it permits any baseline $b$ that does not depend on $\theta$.
This allows gradient calculation using any PG algorithm, including SOTA methods such as PPO \citep{ppo} (which are already used in MRL methods such as VariBAD~\citet{varibad}).
Therefore, in contrast to standard RL, CVaR optimization is not restricted to basic REINFORCE.

Our CVaR Meta Learning method (\textbf{CVaR-ML}, \cref{algo:cvarml}) leverages this property to operate as a \textit{meta-algorithm}, providing a robust version for any given baseline algorithm, such as \citet{maml,varibad}:
\begin{definition}
\label{def:mrl_baseline}
    A \textit{baseline MRL algorithm} learns a meta-policy $\pi_\theta$ using a training step $\mathtt{ML}$. Given a batch of meta-rollouts $\{\Lambda_{i}\}$, $\mathtt{ML}(\{\Lambda_{i}\})$ updates $\pi_\theta$.
\end{definition}

Notice that CVaR-ML only handles task filtering, and uses the baseline training step $\mathtt{ML}$ as a black box (\cref{line-ref:learn}).
Hence, it can be used with \textit{any} MRL baseline -- not just PG methods.
In fact, by using a supervised meta-learning baseline, CVaR-ML can be applied to the supervised setting as well with minimal modifications, namely, replacing meta-rollouts with examples and returns with losses.


\section{Efficient CVaR-ML}
\label{sec:roml}

\begin{figure}[b]
\centering
\begin{minipage}{0.56\textwidth}  
  \centering
  \includegraphics[width=1.\linewidth]{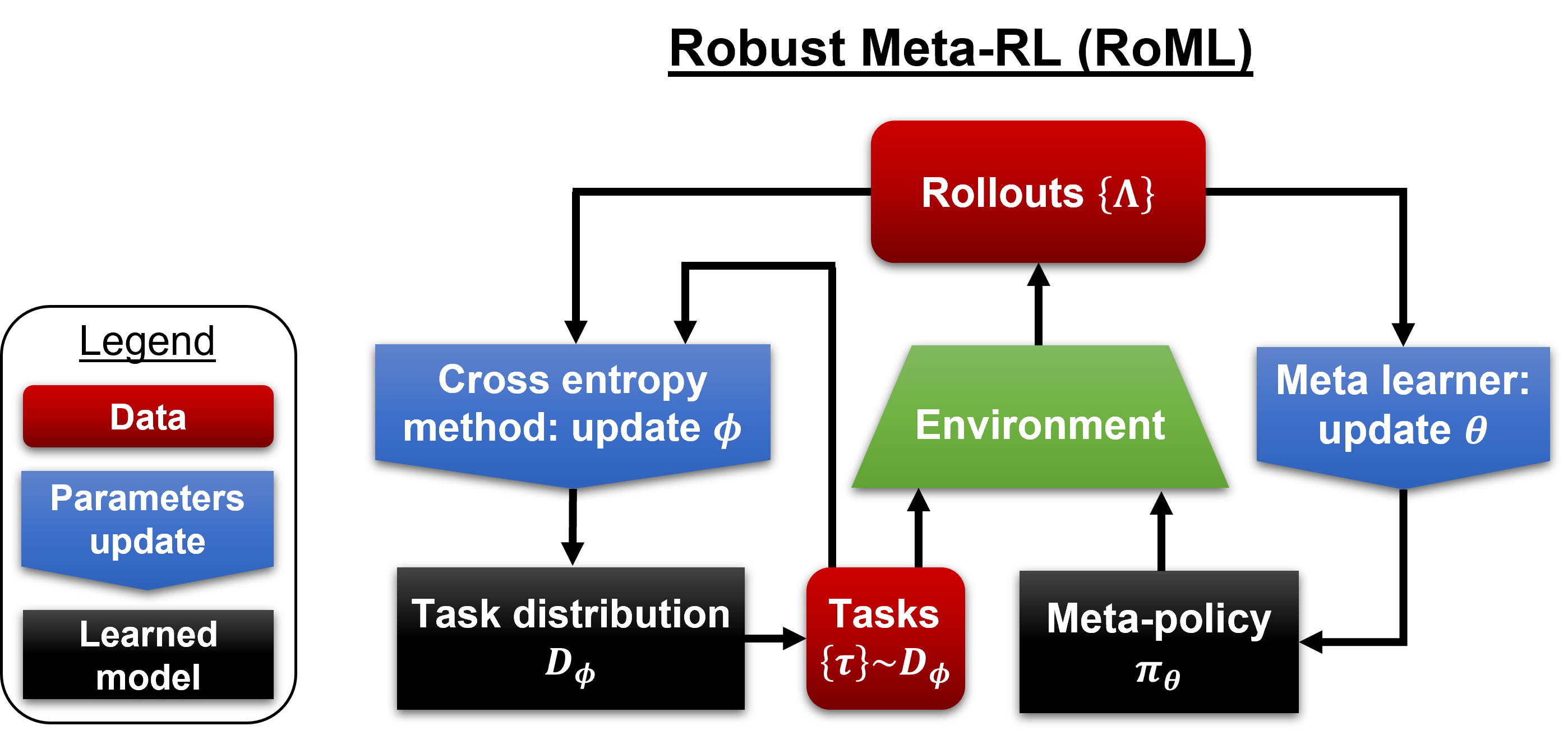}
  \label{fig:roml_diagram}
\end{minipage} \hspace{10pt}
\begin{minipage}{0.37\textwidth}
  \centering
  \includegraphics[width=1.\linewidth]{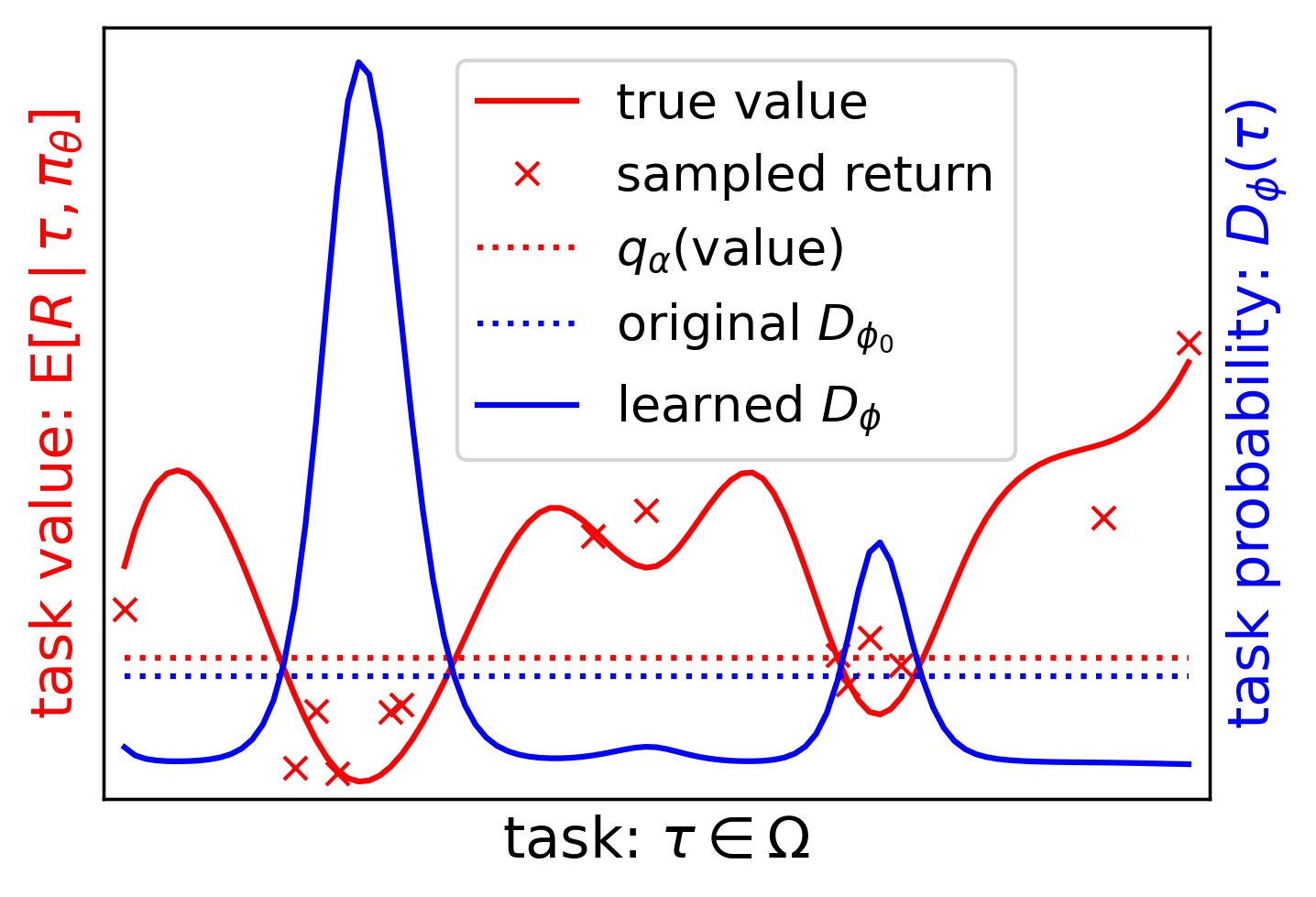}
  \label{fig:roml_illustration}
\end{minipage}
\caption{\small \textbf{Left}: RoML uses the cross entropy method to modify the task distribution $D_\phi$, which is used to generate the next meta-rollouts. \textbf{Right}: illustration of an arbitrary point of time in training: the task distribution $D_\phi$ (blue) is learned according to the task values of the current meta-policy $\pi_\theta$ (red). Since low-return tasks are over-sampled, the learned meta-policy is more robust to the selection of task.}
\label{fig:roml}
\end{figure}

\cref{theorem:mpg} guarantees unbiased gradients when using \cref{algo:cvarml}; however, it does not bound their variance.
In particular, \cref{line-ref:learn} applies the learning step to a subset of only $\alpha N M$ meta-rollouts out of $N M$, which increases the estimator variance by a factor of $\alpha^{-1}$ compared to mean optimization.
This could be prevented if we knew the set $\Omega_\alpha^\theta$ of tail-tasks (for the current $\pi_\theta$), and sampled only these tasks, using the distribution $D_\alpha^\theta(z) = \alpha^{-1} \pmb{1}_{V_z^\theta \le q_\alpha(V_\tau^\theta)} D(z)$.
\cref{prop:efficiency} shows that this would indeed recover the sample efficiency. 

\begin{proposition}[Variance reduction]
\label{prop:efficiency}
Denote by $G$ the estimator of $\nabla_\theta J_\alpha^\theta(R)$ in \cref{eq:PG_est_mrl}, assume there is no quantile error ($\hat{q}_\alpha^\theta = q_\alpha^\theta$), and denote $\mathbb{E}_{D}[\cdot] = \mathbb{E}_{z_i\sim D, \, R_{i,m}\sim P_{z_i}^\theta}[\cdot]$.
Then, switching the task sample distribution to $D_\alpha^\theta$ leads to a variance reduction of factor $\alpha$:
\begin{align*}
    \begin{split}
    \mathbb{E}_{D_\alpha^\theta} [\alpha G] &= \mathbb{E}_{D} [G], \quad
    \mathrm{Var}_{D_\alpha^\theta} (\alpha G) \le \alpha \cdot \mathrm{Var}_D (G) .
    \end{split}
\end{align*}
\end{proposition}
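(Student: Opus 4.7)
The plan is to exploit the importance-sampling identity
\begin{equation*}
\mathbb{E}_D\left[\pmb{1}_{V_z^\theta \le q_\alpha^\theta}\, f(z, R)\right] \;=\; \alpha\,\mathbb{E}_{D_\alpha^\theta}\left[f(z, R)\right],
\end{equation*}
which follows directly from $D_\alpha^\theta(z) = \alpha^{-1}\pmb{1}_{z\in\Omega_\alpha^\theta}\,D(z)$ together with the fact that, conditional on the task $z$, the meta-rollout distribution $P_z^\theta$ is the same under either task distribution. Before applying it, I would rewrite $G = \frac{1}{\alpha N}\sum_{i=1}^N Y_i$ with $Y_i := \pmb{1}_{V_{z_i}^\theta \le q_\alpha^\theta}\,h_i$ and $h_i := \sum_{m=1}^M g_{i,m}$. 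Under the assumption $\hat q_\alpha^\theta = q_\alpha^\theta$ the empirical indicator reduces to $\pmb{1}_{z_i\in\Omega_\alpha^\theta}$, so the $Y_i$ are i.i.d.\ under $D$; under $D_\alpha^\theta$ the indicator is identically one, so $\alpha G$ collapses to the plain average $\tfrac{1}{N}\sum_i h_i$.

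The expectation claim then follows in one line: applying the identity with $f = h$ yields $\mathbb{E}_D[Y_1] = \alpha\,\mathbb{E}_{D_\alpha^\theta}[h_1]$, hence $\mathbb{E}_D[G] = \alpha^{-1}\mathbb{E}_D[Y_1] = \mathbb{E}_{D_\alpha^\theta}[h_1] = \mathbb{E}_{D_\alpha^\theta}[\alpha G]$. For the variance, using independence across $i$ I get $\mathrm{Var}_D(G) = (\alpha^2 N)^{-1}\,\mathrm{Var}_D(Y_1)$ and $\mathrm{Var}_{D_\alpha^\theta}(\alpha G) = N^{-1}\,\mathrm{Var}_{D_\alpha^\theta}(h_1)$, so it suffices to show $\mathrm{Var}_D(Y_1) \succeq \alpha\,\mathrm{Var}_{D_\alpha^\theta}(h_1)$ in the PSD order (reading variance as the covariance matrix of the vector-valued gradient).

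Applying the identity again with $f = hh^\top$ gives $\mathbb{E}_D[Y_1 Y_1^\top] = \alpha\,\mathbb{E}_{D_\alpha^\theta}[h_1 h_1^\top]$, since $Y_1 Y_1^\top = \pmb{1}_{V_{z_1}^\theta\le q_\alpha^\theta}\,h_1 h_1^\top$. Substituting both moment relations into the two covariance expressions and letting the cancellations run leaves
\begin{equation*}
\mathrm{Var}_D(Y_1) - \alpha\,\mathrm{Var}_{D_\alpha^\theta}(h_1) \;=\; \alpha(1-\alpha)\,\mathbb{E}_{D_\alpha^\theta}[h_1]\,\mathbb{E}_{D_\alpha^\theta}[h_1]^\top \;\succeq\; 0,
\end{equation*}
which closes the argument since $\alpha\in(0,1]$. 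The main subtlety I anticipate is reading ``no quantile error'' as also identifying the empirical tail $\{R_i \le \hat q_\alpha^\theta\}$ with the true tail $\{V_{z_i}^\theta \le q_\alpha^\theta\}$; without this identification, the finite-$M$ Monte Carlo noise in $R_i$ re-injects randomness into the indicator, breaking the i.i.d.\ structure of the $Y_i$ and making the factor-$\alpha$ reduction only approximate.
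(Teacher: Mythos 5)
Your proof is correct and follows essentially the same route as the paper's: both compute the first and second moments of the per-task summand directly, convert between $\mathbb{E}_D[\pmb{1}_{\text{tail}}\,\cdot\,]$ and $\alpha\,\mathbb{E}_{D_\alpha^\theta}[\,\cdot\,]$, and identify the slack as the rank-one term $\alpha(1-\alpha)\,\mathbb{E}_{D_\alpha^\theta}[h_1]\mathbb{E}_{D_\alpha^\theta}[h_1]^\top$ (equivalently the paper's $(\alpha^{-1}-1)\mathbb{E}_{D_\alpha^\theta}[G_1]^2$). Your closing caveat about reading ``no quantile error'' as also identifying the empirical tail event $\{R_i\le \hat q_\alpha^\theta\}$ with the true tail $\{V_{z_i}^\theta\le q_\alpha(V_\tau^\theta)\}$ is well taken -- the paper's proof makes the same identification implicitly.
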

\begin{proof}[Proof sketch (the complete proof is in \cref{app:efficiency})]
We calculate the expectation and variance directly.
$G$ is proportional to $\pmb{1}_{R_i \le q_\alpha^\theta}$ (\cref{eq:PG_est_mrl}). The condition $R_i \le q_\alpha^\theta$ leads to multiplication by the probability $\alpha$ when sampled from $D$ (where it corresponds to the $\alpha$-tail); but not when sampled from $D_\alpha^\theta$ (where it is satisfied w.p.~1).
This factor $\alpha$ cancels out the ratio between the expectations of $G$ and $\alpha G$ (thus the expectations are identical) -- but not the ratio $\alpha^2$ between their variances.
\end{proof}

Following the motivation of \cref{prop:efficiency}, we wish to increase the number of train tasks that come from the tail distribution $\Omega_\alpha^\theta$.
To that end, we assume to have certain control over the sampling of training tasks.
This assumption is satisfied in most simulated environments, as well as many real-world scenarios. For example, when training a driver, we choose the tasks, roads and times of driving throughout training.
In this section, we propose a method to make these choices.

We begin with parameterizing the task distribution $D$: we consider a parametric family $D_\phi$ such that $D=D_{\phi_0}$. Then, we wish to modify the parameter $\phi$ so that $D_\phi$ aligns with $D_\alpha^\theta$ as closely as possible. 
To that end, we use the Cross Entropy Method (CEM,~\citet{CE_tutorial}), which searches for $\phi^*$ that minimizes the KL-divergence (i.e., cross entropy) between the two:
\begin{align}
\label{eq:CEM0}
\begin{split}
\phi^* \in \mathrm{argmin}_{\phi'} \; D_{KL}\left( D_\alpha^\theta \,||\, D_{\phi'} \right) 
&= \ \mathrm{argmax}_{\phi'} \; \mathbb{E}_{z\sim D_{\phi_0}} \left[\pmb{1}_{V_z^\theta\le q_\alpha(V_\tau^\theta)} \log D_{\phi'}(z)\right] \\
&= \ \mathrm{argmax}_{\phi'} \; \mathbb{E}_{z\sim D_{\phi}} \left[w(z)\;\pmb{1}_{V_z^\theta\le q_\alpha(V_\tau^\theta)} \log D_{\phi'}(z)\right] ,
\end{split}
\end{align}
%
where $w(z) = \frac{D_{\phi_0}(z)}{D_{\phi}(z)}$ is the importance sampling weight corresponding to $z \sim D_{\phi}$.
Note that \cref{eq:CEM0} has a closed-form solution for most of the standard families of distributions \citep{CE_tutorial}.

\begin{wrapfigure}[29]{L}{0.55\textwidth}
\vspace{-14pt}
\hspace{3pt}
\begin{minipage}{\linewidth}
\begin{algorithm}[H]
\caption{Robust Meta RL (RoML)}
\label{algo:roml}
\DontPrintSemicolon
\SetAlgoNoLine
\SetNoFillComment

{\bf Input}: Meta-learning algorithm (\cref{def:mrl_baseline}); robustness level $\alpha\in(0,1]$; parametric task distribution $D_{\phi}$; original parameter $\phi_0$; $N$ tasks per batch, $\nu\in[0,1)$ of them sampled from the original $D_{\phi_0}$; CEM quantile $\beta\in(0,1)$\;
 \BlankLine
 {\bf Initialize:}\;
 $\quad\ \ \phi\leftarrow {\phi_0}, \quad $
 $N_o\leftarrow \lfloor \nu N \rfloor, \quad $
 $N_s\leftarrow \lceil (1-\nu) N \rceil$\;
 \BlankLine
 \While{not finished training}{
 
	\tcp{Sample tasks}
	Sample $\{z_{o,i}\}_{i=1}^{N_o} \sim D_{{\phi_0}},\quad\{z_{\phi,i}\}_{i=1}^{N_s} \sim D_{\phi}$\; \label{line:sample}
    $z \leftarrow (z_{o,1},\ldots, z_{o,N_o}, z_{\phi,1},\ldots, z_{\phi,N_s})$\;  
    
	\tcp{Rollouts and meta-learning step}
    $\{\Lambda_{i}\}_{i=1}^N \leftarrow \text{rollout}(\{z_{i}\}_{i=1}^{N})$\; \label{line:rollout}
    $ R_{i} \leftarrow \text{return}(\Lambda_{i}), \quad \forall i \in \{1,\ldots,N\} $\;
	$\mathtt{ML}(\{\Lambda_{i}\}_{i=1}^N)$\; \label{line:run}
	
	\tcp{Estimate reference quantile}
    $w_{o,i} \leftarrow 1, \qquad\qquad\ \forall i\in\{1,\ldots,N_o\}$\; 
    $w_{\phi,i} \leftarrow \frac{D_{\phi_0}(z_{\phi,i})}{ D_{\phi}(z_{\phi,i})}, \quad \forall i\in \{1,\ldots,N_s\}$\; \label{line:w_s}
    $w \leftarrow (w_{o,1},\ldots, w_{o,N_o}, w_{\phi,1},\ldots, w_{\phi,N_s})$\; \label{line:w_all}
    $\hat{q}_\alpha \leftarrow \text{weighted\_quantile}(\{R_{i}\},\, w,\, \alpha)$
	
	\tcp{Compute sample quantile}
	${q}_{\beta} \leftarrow \text{quantile}(\{R_{i}\},\, \beta)$\;
	
	\tcp{Update sampler}
	$q \leftarrow \max (\hat{q}_\alpha,\, {q}_{\beta})$\; \label{line:qref}
    $\phi \leftarrow \arg\!\max_{\phi^\prime}\!
    \sum_{i\leq N} \! w_i \;\pmb{1}_{R_i\le q} \log\! D_{\phi^\prime}(z_i)$\; \label{line:update}
 }
\end{algorithm}
\end{minipage}
\end{wrapfigure}

For a batch of $N$ tasks sampled from $D=D_{\phi_0}$, \cref{eq:CEM0} essentially chooses the $\alpha N$ tasks with the lowest returns, and updates $\phi$ to focus on these tasks.
This may be noisy unless $\alpha N \gg 1$.
Instead, the CEM chooses a larger number of tasks $\beta > \alpha$ for the update, where $\beta$ is a hyper-parameter.
$\phi$ is updated according to these $\beta N$ lowest-return tasks, and the next batch is sampled from $D_\phi \ne D_{\phi_0}$.
This repeats iteratively: every batch is sampled from $D_\phi$, where $\phi$ is updated according to the $\beta N$ lowest-return tasks of the former batch.
Each task return is also compared to the $\alpha$-quantile of the \textit{original} distribution $D_{\phi_0}$. If more than $\beta N$ tasks yield lower returns, the CEM permits more samples for the update step.
The return quantile over $D_{\phi_0}$ can be estimated from $D_\phi$ at any point using importance sampling weights.
See more details about the CEM in \cref{app:cem_background}.


In our problem, the target distribution is the tail of $D_{\phi_0}$.
Since the tail is defined by the agent returns in these tasks,
it varies with the agent and is non-stationary throughout training.
Thus, we use the dynamic-target CEM of \citet{cross_entropy_method}.
To smooth the changes in the sampled tasks, the sampler is also regularized to always provide certain exposure to all the tasks: we force
$\nu$ percent of every batch to be sampled from the original distribution $D = D_{\phi_0}$, and only $1-\nu$ percent from $D_\phi$.

Putting this together, we obtain the Robust Meta RL algorithm (\textbf{RoML}), summarized in \cref{algo:roml} and \cref{fig:roml}.
RoML does not require multiple meta-rollouts per update (parameter $M$ in \cref{algo:cvarml}), since it directly models high-risk tasks.
Similarly to CVaR-ML, RoML is a meta-algorithm and can operate on top of any meta-learning baseline (\cref{def:mrl_baseline}).
Given the baseline implementation, only the tasks sampling procedure needs to be modified, which makes RoML easy to implement.

\ra{\textbf{Limitations:}}
The CEM's adversarial tasks sampling relies on several assumptions. Future research may reduce some of these assumptions, while keeping the increased data efficiency of RoML.

First, as mentioned above, we need at least partial control over the selection of training tasks.
This assumption is common in other RL frameworks \citep{PAIRED,robust_PLR}, and often holds in both simulations and the real world (e.g., choosing in which roads and hours to train driving).

Second, the underlying task distribution $D$ is assumed to be known, and the sample distribution is limited to the chosen parameterized family $\{D_\phi\}$. For example, if $\tau\sim U([0,1])$, the user may choose the family of Beta distributions $Beta(\phi_1,\phi_2)$ (where $Beta(1,1)\equiv U([0,1])$), as demonstrated in \cref{app:mujoco}.
The selected family expresses implicit assumptions on the task-space. For example, if the probability density function is smooth, close tasks will always have similar sample probabilities; and if the family is unimodal, high-risk tasks can only be over-sampled from around a single peak.
This approach is useful for generalization across continuous task-spaces -- where the CEM can never observe the infinitely many possible tasks. Yet, it may pose limitations in certain discrete task-spaces, if there is no structured relationship between tasks.


\section{Experiments}
\label{sec:experiments}

We implement \textbf{RoML} and \textbf{CVaR-ML} on top of two different risk-neutral MRL baselines -- \textbf{VariBAD}~\citep{varibad} and \textbf{PEARL}~\citep{pearl}.
As a risk-averse reference for comparison, we use \textbf{CeSoR}~\citep{cesor}, an efficient sampling-based method for CVaR optimization in RL, implemented on top of PPO.
As another reference, we use the Unsupervised Environment Design algorithm \textbf{PAIRED}~\citep{PAIRED}, which uses regret minimization to learn robust policies on a diverse set of tasks.

\cref{sec:kd} demonstrates the mean/CVaR tradeoff, as our methods learn substantially different policies from their baseline. 
\cref{sec:mujoco} demonstrates the difficulty of the naive CVaR-ML in more challenging control benchmarks, and the RoML's efficacy in them.
\ra{The ablation test} in \cref{app:ablation_cem} demonstrates that RoML deteriorates significantly when the CEM is replaced by a naive adversarial sampler.
\cref{sec:sine} presents an implementation of CVaR-ML and RoML on top of \textbf{MAML}~\citep{maml} for \textit{supervised} meta-learning.
In all the experiments, the running times of RoML and CVaR-ML are indistinguishable from their baselines (RoML's CEM computations are negligible).

\ra{\textbf{Hyper-parameters:}}
To test the practical applicability of RoML as a meta-algorithm, in every experiment, \textbf{we use the same hyper-parameters for RoML, CVaR-ML and their baseline}.
In particular, we use the baseline's default hyper-parameters whenever applicable (\citet{varibad,pearl} in \cref{sec:mujoco}, and \citet{maml} in \cref{sec:sine}).
That is, we use the same hyper-parameters as originally tuned for the baseline, and test whether RoML improves the robustness without any further tuning of them.
As for the additional hyper-parameters of the meta-algorithm itself: in \cref{algo:cvarml}, we use $M=1$ meta-rollout per task; and in \cref{algo:roml}, we use $\beta=0.2,\, \nu=0$ unless specified otherwise (similarly to the CEM in \citet{cesor}).
For the references PAIRED and CeSoR, we use the hyper-parameters of \citet{PAIRED,cesor}.
Each experiment is repeated for 30 seeds.
See more details in \cref{app:detailed_results}.
The code is available in our repositories: \repo{VariBAD}, \repopearl{PEARL}, \repocesor{CeSoR}, \repopaired{PAIRED} and \repomaml{MAML}.



\subsection{Khazad Dum}
\label{sec:kd}

We demonstrate the tradeoff between mean and CVaR optimization in the Khazad Dum benchmark, visualized in \cref{fig:kd_trajs}. The agent begins at a random point in the bottom-left part of the map, and has to reach the green target as quickly as possible, without falling into the black abyss.
The bridge is not covered and thus is exposed to wind and rain, rendering its floor slippery and creating an additive action noise (\cref{fig:slippery}) -- to a level that varies with the weather.
Each task is characterized by the rain intensity, which is exponentially distributed. The CEM in RoML is allowed to modify the parameter of this exponential distribution.
Note that the agent is not aware of the current task (i.e, the weather), but may infer it from observations.
We set the target risk level to $\alpha=0.01$, and train each meta-agent for a total of $5\cdot10^6$ frames.
See complete details in \cref{app:kd}.

\begin{figure}[!t]
\centering
\begin{minipage}{0.47\textwidth}
\centering
\begin{subfigure}{.45\linewidth}
  \centering
  \includegraphics[width=1.\linewidth]{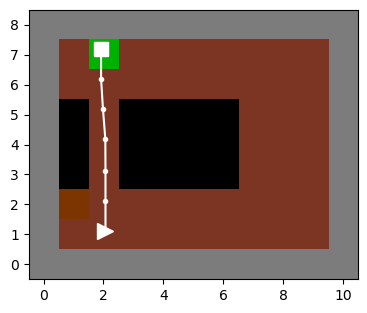}
  \caption{VariBAD}
\end{subfigure}
\begin{subfigure}{.45\linewidth}
  \centering
  \includegraphics[width=1.\linewidth]{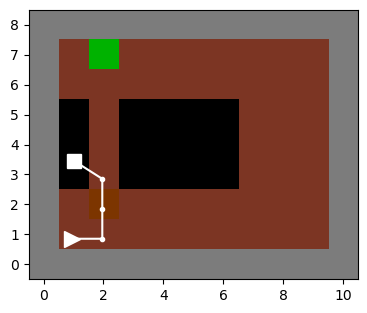}
  \caption{VariBAD}
  \label{fig:slippery}
\end{subfigure} \\
\begin{subfigure}{.45\linewidth}
  \centering
  \includegraphics[width=1.\linewidth]{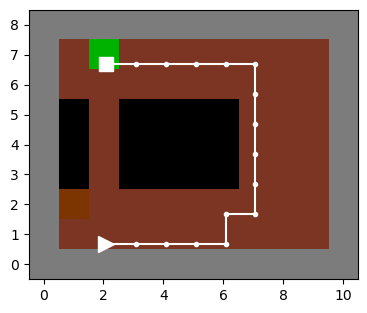}
  \caption{CVaR-ML}
\end{subfigure}
\begin{subfigure}{.45\linewidth}
  \centering
  \includegraphics[width=1.\linewidth]{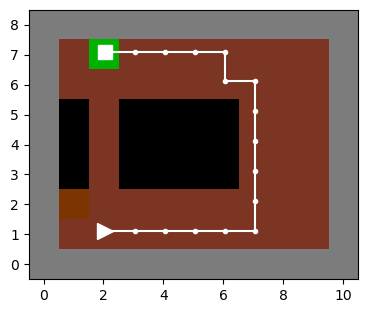}
  \caption{RoML}
\end{subfigure}
\begin{subfigure}{.49\linewidth}
  \centering
  \includegraphics[width=1.\linewidth]{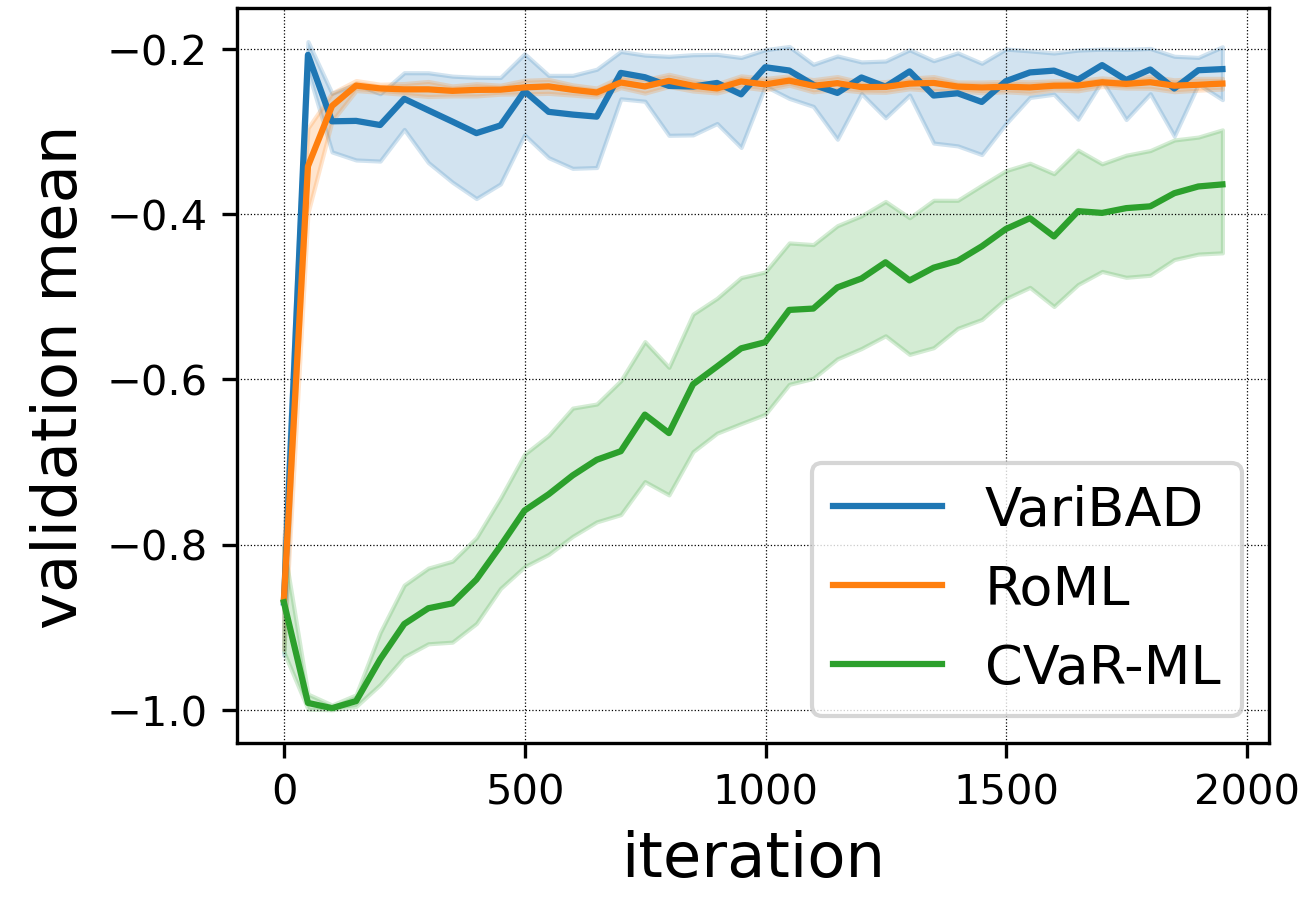}
  \caption{Mean}
  \label{fig:kd_valid_mean}
\end{subfigure}
\begin{subfigure}{.49\linewidth}
  \centering
  \includegraphics[width=1.\linewidth]{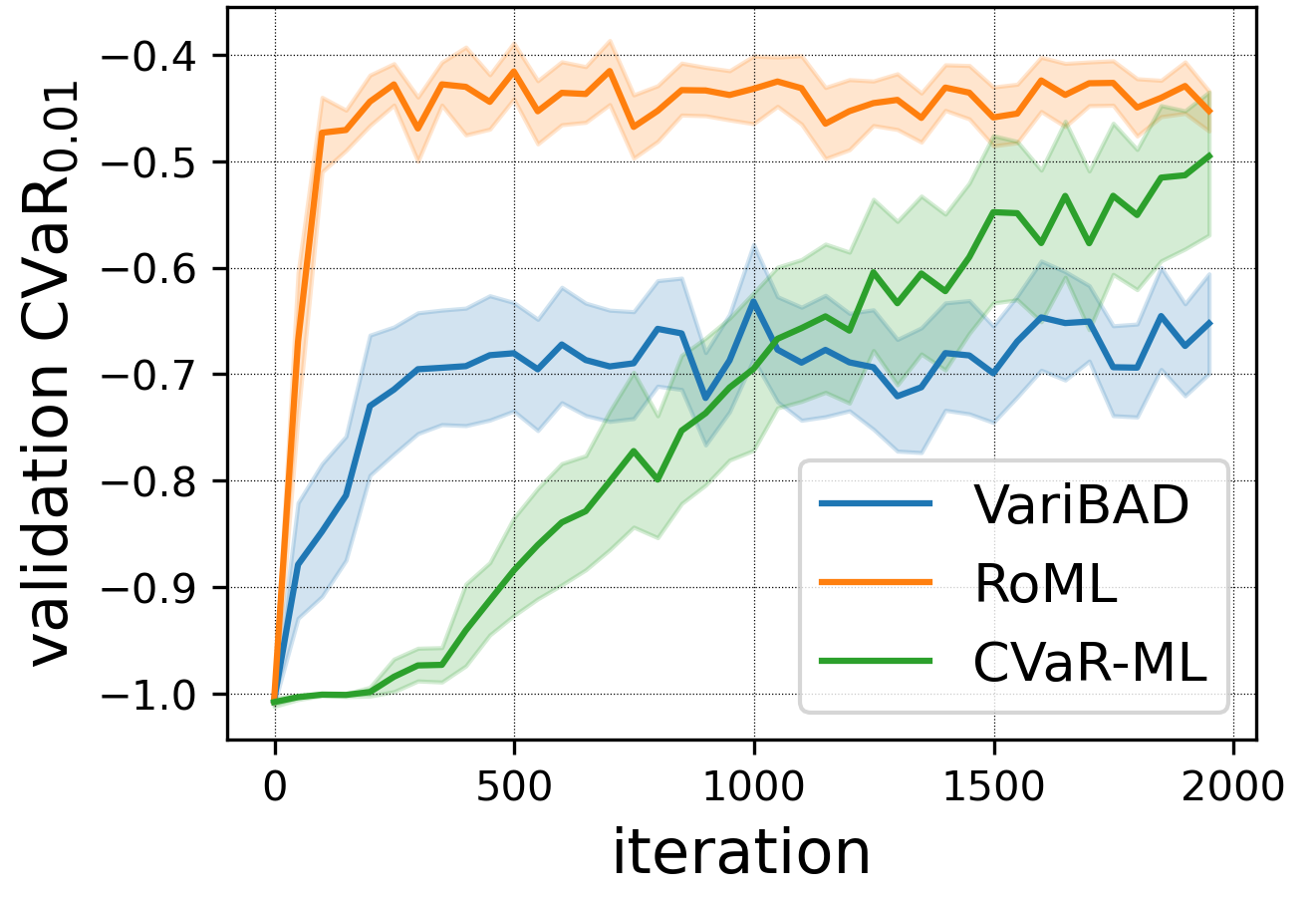}
  \caption{CVaR}
  \label{fig:kd_valid_cvar}
\end{subfigure}
\caption{\small Khazad-Dum: (a-d) Sample episodes. (e-f) Test return vs.~training iteration, with 95\% confidence intervals over 30 seeds.}
\label{fig:kd_trajs}
\end{minipage}
\hspace{0.03\textwidth}
\begin{minipage}{0.47\textwidth}
\centering
\begin{subfigure}{.28\linewidth}
  \centering
  \includegraphics[width=1.\linewidth]{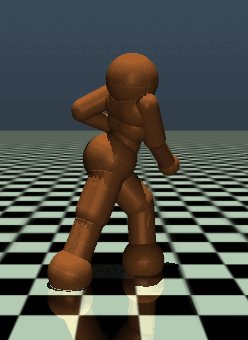}
  \caption{Hum-Mass (VariBAD)}
  \label{fig:humm_vis_varibad}
\end{subfigure}
\hspace{.06\linewidth}
\begin{subfigure}{.5\linewidth}
  \centering
  \includegraphics[width=1.\linewidth]{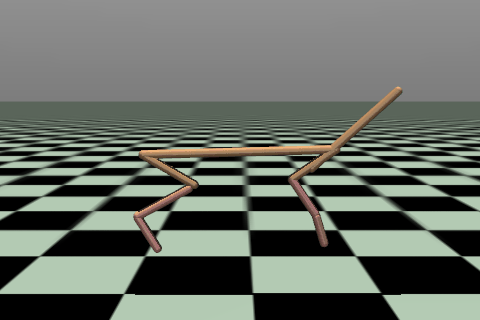}
  \caption{HalfCheetah-Body}
  \label{fig:hcb_vis}
\end{subfigure}
\begin{subfigure}{.28\linewidth}
  \centering
  \includegraphics[width=1.\linewidth]{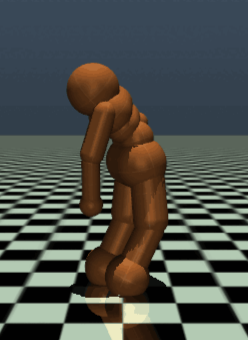}
  \caption{Hum-Mass (RoML)}
  \label{fig:humm_vis_roml}
\end{subfigure}
\begin{subfigure}{.62\linewidth}
  \centering
  \includegraphics[width=1.\linewidth]{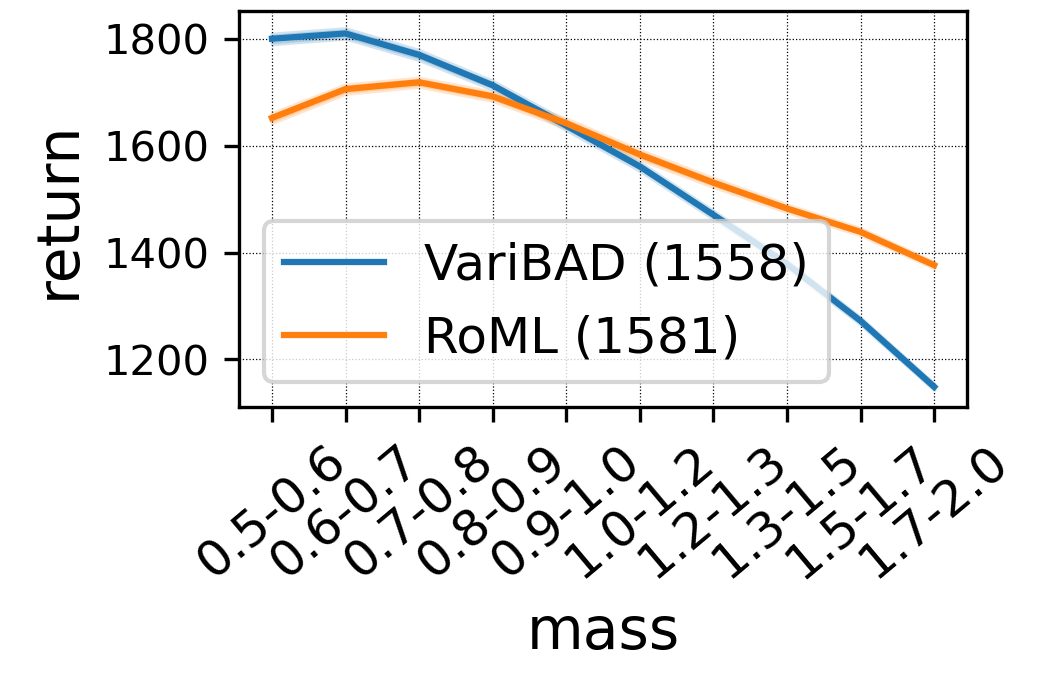}
  \caption{HalfCheetah-Mass}
  \label{fig:hcm_tasks}
\end{subfigure} 
\caption{\small MuJoCo: (a-c) Sample frames, where tasks vary in mass, head size and damping level. \ra{Notice} that in Humanoid, RoML handles the low-return tasks of large mass by leaning the center of mass forward, so that gravity pulls the humanoid forward. (d) Average return per range of tasks in HalfCheetah-Mass. RoML learns to act robustly: it is less sensitive to the task, and performs better on high-risk tasks.}
\label{fig:mujoco_vis}
\end{minipage}
\end{figure}

We implement CVaR-ML and RoML on top of VariBAD.
As shown in \cref{fig:kd_trajs}, VariBAD learns to try the short path -- at the risk of rare falls into the abyss.
By contrast, our CVaR-optimizing methods take the longer path and avoid risk.
This policy increases the cumulative cost of time-steps, but
leads to higher CVaR returns, as shown in \cref{fig:kd_valid_cvar}.
In addition to superior CVaR, RoML also provides competitive \textit{average} returns in this example (\cref{fig:kd_valid_mean}).
Finally, in accordance with \cref{prop:efficiency}, RoML learns significantly faster than CVaR-ML.



\subsection{Continuous Control}
\label{sec:mujoco}

We rely on standard continuous control problems from the MuJoCo framework~\citep{mujoco}: training a cheetah to run (\textbf{HalfCheetah}), and training a \textbf{Humanoid} and an \textbf{Ant} to walk.
For each of the 3 environments, we create 3 meta-learning versions: 
(1) \textit{Goal} or \textit{Vel}~\citep{maml}, where each task corresponds to a different location or velocity objective, respectively; (2) \textit{Mass}, where each task corresponds to a different body mass; and (3) \textit{Body}, where each task corresponds to different mass, head size and physical damping level (similarly to \citet{model_based_mrl}).
In addition, to experiment with high-dimensional task spaces, we randomly draw 10 numeric variables from \textit{env.model} in HalfCheetah, and let them vary between tasks.
We define 3 such environments with different random sets of task variables (\textbf{HalfCheetah 10D-task a,b,c}).
For each of the 12 environments above, we set a target risk level of $\alpha=0.05$ and optimize for $K=2$ episodes per task.
Additional implementation details are specified in \cref{app:mujoco}.

Interestingly, the naive approach of \textbf{CVaR-ML} consistently fails to meta-learn in all the cheetah environments. It remains unsuccessful even after large number of steps, indicating a difficulty beyond sample inefficiency.
A possible explanation is the effectively decreased batch size of CVaR-ML.
\textbf{PAIRED} and \textbf{CeSoR} also fail to adjust to the MRL environments, and obtain poor CVaR returns.


\begin{table}[]
\caption{CVaR$_{0.05}$ return over 1000 test tasks, for different models and MuJoCo environments. Standard deviation is presented over 30 seeds. Mean returns are displayed in \cref{tab:mujoco_mean}.}
\label{tab:mujoco}
\centering
\small\addtolength{\tabcolsep}{-1pt}
\begin{tabular}{|l|ccc|ccc|}
\toprule
& \multicolumn{3}{c|}{HalfCheetah}                              & \multicolumn{3}{c|}{HalfCheetah 10D-task}                      \\
\multicolumn{1}{|c|}{}                               & Vel                & Mass                & Body               & (a)                 & (b)                 & (c)                \\ \midrule
CeSoR                                          & $-2606 \pm 25$     & $902 \pm 36$        & $478 \pm 27$       & $637 \pm 26$                   & $981 \pm 31$                   & $664 \pm 26$  \\
PAIRED                                         & $-725 \pm 65$     & $438 \pm 37$        & $218 \pm 51$       & $229 \pm 59$                   & $354 \pm 53$                   & $81 \pm 65$  \\
CVaR-ML & $-897 \pm 23$      & $38 \pm 6$         & $76 \pm 5$        & $120 \pm 11$                   & $141 \pm 11$ & $81 \pm 4$           \\ 
PEARL & $-1156 \pm 23$     & $1115 \pm 19$       & $800 \pm 5$       & $1140 \pm 33$       & $1623 \pm 23$       & $\pmb{1016 \pm 5}$      \\
VariBAD                & $-202 \pm 6$      & $1072 \pm 16$       & $835 \pm 30$       & $1126 \pm 6$       & $1536 \pm 39$       & $988 \pm 13$       \\
RoML (VariBAD)   & $\pmb{-184 \pm 4}$ & $\pmb{1259 \pm 19}$ & $\pmb{935 \pm 17}$ & $\pmb{1227 \pm 13}$ & $\pmb{1697 \pm 24}$ & $999 \pm 20$ \\ 
RoML (PEARL)                                         & $-1089 \pm 31$     & $1186 \pm 34$       & $808 \pm 6$       & $1141 \pm 27$       & $\pmb{1657 \pm 18}$       & $\pmb{1024 \pm 6}$       \\ \midrule
& \multicolumn{3}{c|}{Humanoid}                                 & \multicolumn{3}{c|}{Ant}       \\
    & Vel & Mass & Body & Goal & Mass & Body \\ \midrule
VariBAD                        & $801 \pm 10$      & $1283 \pm 18$       & $1290 \pm 19$       & $-500 \pm 9$       & $1370 \pm 6$      & $\pmb{1365 \pm 4}$       \\
RoML (VariBAD)                           & $\pmb{833 \pm 4}$ & $\pmb{1378 \pm 20}$ & $\pmb{1365 \pm 21}$ & $\pmb{-454 \pm 8}$ & $\pmb{1385 \pm 3}$ & $\pmb{1368 \pm 4}$ \\ \bottomrule
\end{tabular}
\end{table}



\textbf{RoML}, on the other hand, consistently improves the CVaR returns (\cref{tab:mujoco}) compared to its baseline (VariBAD or PEARL), while using the same hyper-parameters as the baseline.
The VariBAD baseline presents better returns and running times than PEARL on HalfCheetah, and thus is used for the 6 extended environments (Humanoid and Ant).
RoML improves the CVaR return in comparison to the baseline algorithm in all the 18 experiments (6 with PEARL and 12 with VariBAD). 

In 5 out of 18 experiments, RoML slightly improves the \textit{average} return compared to its baseline, and not only the CVaR (\cref{tab:mujoco_mean} in the appendix).
\ra{This indicates that low-return tasks can sometimes be improved at the cost of high-return tasks, but without hurting average performance. In addition, this may indicate that over-sampling difficult tasks forms a helpful learning curriculum.}

The robustness of RoML to the selected task is demonstrated in \cref{fig:hcm_tasks}.
In multi-dimensional task spaces, RoML learns to focus the sampling modification on the high-impact variables, as demonstrated in \cref{fig:mujoco_dist} in the appendix.
Finally, qualitative inspection shows that RoML learns to handle larger masses, for example, by leaning forward and letting gravity pull the agent forward, as displayed in \cref{fig:humm_vis_roml} (see animations on \repo{GitHub}).




\subsection{Beyond RL: Robust Supervised Meta-Learning}
\label{sec:sine}

\begin{wrapfigure}[12]{R}{0.36\textwidth}
\vspace{-33pt}
\includegraphics[width=1.\linewidth]{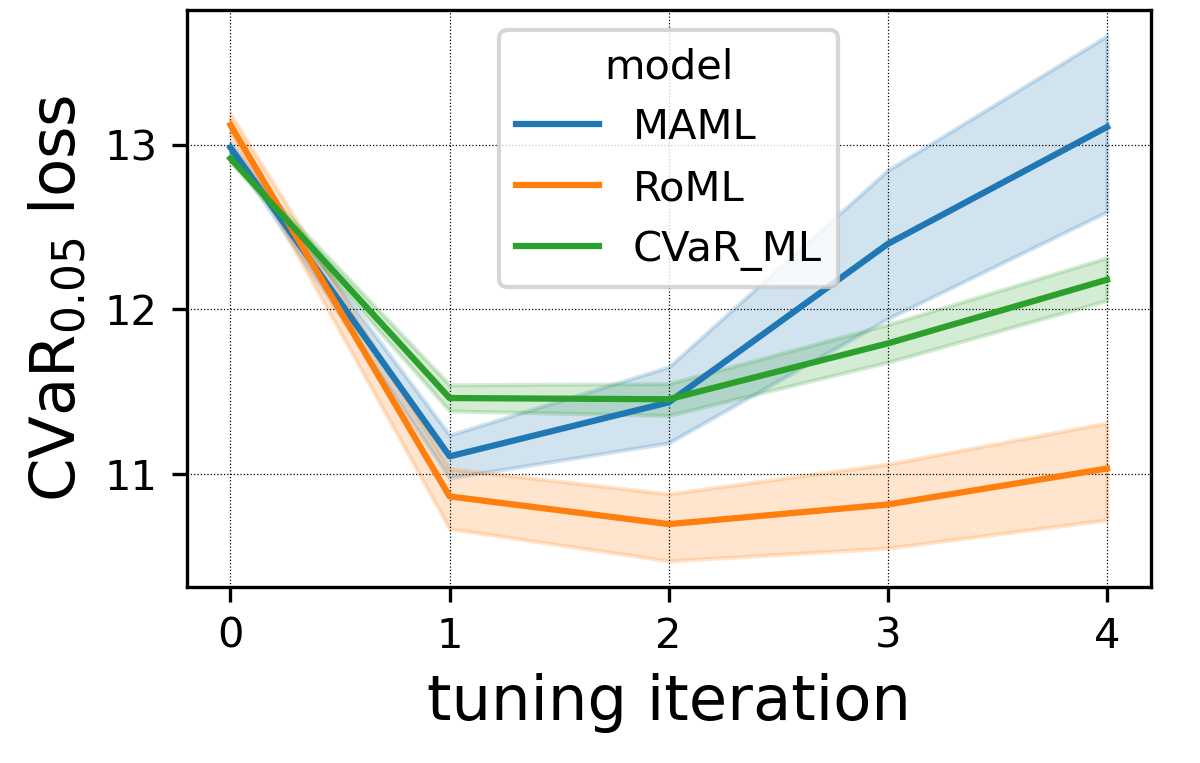}
\caption{\small Supervised Sine Regression: CVaR loss over 10000 test tasks, against the number of tuning gradient-steps at test time. The 95\% confidence intervals are calculated over 30 seeds.}
\label{fig:sine_cvar_only}
\end{wrapfigure}

Our work focuses on robustness in MRL.
However, the concept of training on harder data to improve robustness, as embodied in RoML, is applicable beyond the scope of RL.
As a preliminary proof-of-concept, we apply RoML to a \ra{toy supervised meta-learning} problem of sine regression, based  on \citet{maml}:
The input is $x\in[0,2\pi)$, the desired output is $y=A\sin(\omega x + b)$, and the task is defined by the parameters $\tau=(A,b,\omega)$.
Similarly to \citet{maml}, the model is fine-tuned  for each task via a gradient-descent optimization step over 10 samples $\{(x_i,y_i)\}_{i=1}^{10}$, and is tested on another set of 10 samples. The goal is to find model weights that adapt quickly to new task data. 

We implement CVaR-ML and RoML on top of MAML~\citep{maml}. 
As shown in \cref{fig:sine_cvar_only}, RoML achieves better CVaR losses over tasks than both CVaR-ML and MAML.
The complete setting and results are presented in \cref{app:sine}.



\section{Summary and Future Work}
\label{sec:summary}

We defined a robust MRL objective and derived the CVaR-ML algorithm to optimize it.
In contrast to its analogous algorithm in standard RL, we proved that CVaR-ML does not present biased gradients, yet it does inherit the same data inefficiency. To address the latter, we introduced RoML and demonstrated its advantage in sample efficiency and CVaR return.



\ra{Future} research may address the CEM-related limitations of RoML discussed at the end of \cref{sec:roml}.
Another direction for future work is extension of RoML to other scenarios, especially where a natural task structure can be leveraged to improve task robustness, e.g., in supervised learning and coordinate-based regression \citep{coord_based_regression}.

Finally, RoML is \repo{easy to implement}, operates agnostically as a meta-algorithm on top of existing MRL methods, and can be set to any desired robustness level.
We believe that these properties, along with our empirical results, make RoML a promising candidate for MRL in risk-sensitive applications.

\textbf{\ra{Acknowledgements}:} This work has received funding from the European Union's Horizon Europe Programme, under grant number 101070568.


\newpage

\bibliographystyle{plainnat}
\bibliography{main}

\newpage
\appendix


\setcounter{tocdepth}{1}
\tableofcontents
\newpage


\section{Policy Gradient for CVaR Optimization in Meta RL}
\label{app:pg}

In this section we provide the complete proof for \cref{theorem:mpg}.
For completeness, \cref{app:pg_rl} recaps of the analogous proof in standard (non-meta) RL, before moving on to the proof in \cref{app:pg_mrl}.
This allows us to highlight the differences between the two.

The substantial difference is that in RL, the CVaR is defined directly over the low-return trajectories, and the policy parameter $\theta$ affects the probability of each trajectory in the tail (\cref{eq:conservation_rl}).
In MRL (\cref{eq:obj_cvar}), on the other hand, the CVaR is defined over the low-return tasks, whose probability is not affected directly by $\theta$ (\cref{eq:conservation}). This allows a decoupling between $\theta$ and $\tau$, which results in \cref{theorem:mpg}.

Another high-level intuition is as follows.
In RL, the CVaR-PG gradient is invariant to successful strategies, hence must be explicitly negative for the unsuccessful ones (in order not to encourage them, see \cref{fig:pg}).
In MRL, within the tasks of interest, the gradient always encourages successful strategies on account of the unsuccessful ones (\cref{fig:mpg}).


\subsection{Recap: PG for CVaR in (non-meta) RL}
\label{app:pg_rl}

We briefly recap the calculation of Propositions 1 and 2 in \citet{gcvar} for CVaR policy gradient under the standard RL settings.

\begin{definition}[CVaR return in (non-meta) RL]
Consider an MDP $(S,A,P,\gamma,P_0)$ with the cumulative reward (i.e., return) $R\sim P^\theta$, whose $\alpha$-quantile is $q_\alpha^\theta(R)$.
Recall the CVaR objective defined in \cref{sec:preliminaries}:
$$ \tilde{J}_\alpha^\theta(R) = \mathtt{CVaR}^\alpha_{R\sim P^\theta}[R] = \int_{-\infty}^{q_\alpha^\theta(R)} x \cdot P^\theta(x) \cdot dx $$
\end{definition}

To calculate the policy gradient of $\tilde{J}_\alpha^\theta(R)$, we begin with the conservation of probability mass below the quantile $q_\alpha^\theta$:
$$ \int_{-\infty}^{q_\alpha^\theta(R)} P^\theta(x) dx \equiv \alpha . $$
Then, using the Leibniz integral rule, we have
\begin{align}
\label{eq:conservation_rl}
\begin{split}
0 &= \nabla_\theta \int_{-\infty}^{q_\alpha^\theta(R)} P^\theta(x) dx = \left[ \int_{-\infty}^{q_\alpha^\theta(R)} \nabla_\theta P^\theta(x) dx \right] \ + \ \left[ P^\theta(q_\alpha^\theta(R)) \cdot \nabla_\theta q_\alpha^\theta(R) \right] .
\end{split}
\end{align}
Notice that as a particular consequence of the conservation rule \cref{eq:conservation_rl}, positive gradients of $P^\theta(x)$ cause the quantile $q_\alpha^\theta(R)$ to decrease. This phenomenon makes the CVaR policy gradient sensitive to the selection of baseline, as visualized in \cref{fig:cvar_pg}.
In fact, the quantile $q_\alpha^\theta(R)$ itself is the only baseline that permits unbiased gradients:
\begin{align}
\label{eq:pg_rl2}
\begin{split}
\nabla_\theta \tilde{J}_\alpha^\theta(R) &= \nabla_\theta \int_{-\infty}^{q_\alpha^\theta(R)} x \cdot P^\theta(x) \cdot dx = \\ & \left[ \int_{-\infty}^{q_\alpha^\theta(R)} x \cdot \nabla_\theta P^\theta(x) dx \right] \ + \ \left[ q_\alpha^\theta(R) \cdot P^\theta(q_\alpha^\theta(R)) \cdot \nabla_\theta q_\alpha^\theta(R) \right] \overbrace{=}^{\cref{eq:conservation_rl}} \\
& \left[ \int_{-\infty}^{q_\alpha^\theta(R)} x \cdot \nabla_\theta P^\theta(x) dx \right] \ - \ \left[ q_\alpha^\theta(R) \cdot \int_{-\infty}^{q_\alpha^\theta(R)} \nabla_\theta P^\theta(x) dx \right] = \\
& \int_{-\infty}^{q_\alpha^\theta(R)} (x - q_\alpha^\theta(R)) \cdot \nabla_\theta P^\theta(x) dx,
\end{split}
\end{align}
which gives us \cref{eq:PG_rl}.

\begin{figure}[h]
\centering
\begin{subfigure}{.5\linewidth}
  \centering
  \includegraphics[width=1.\linewidth]{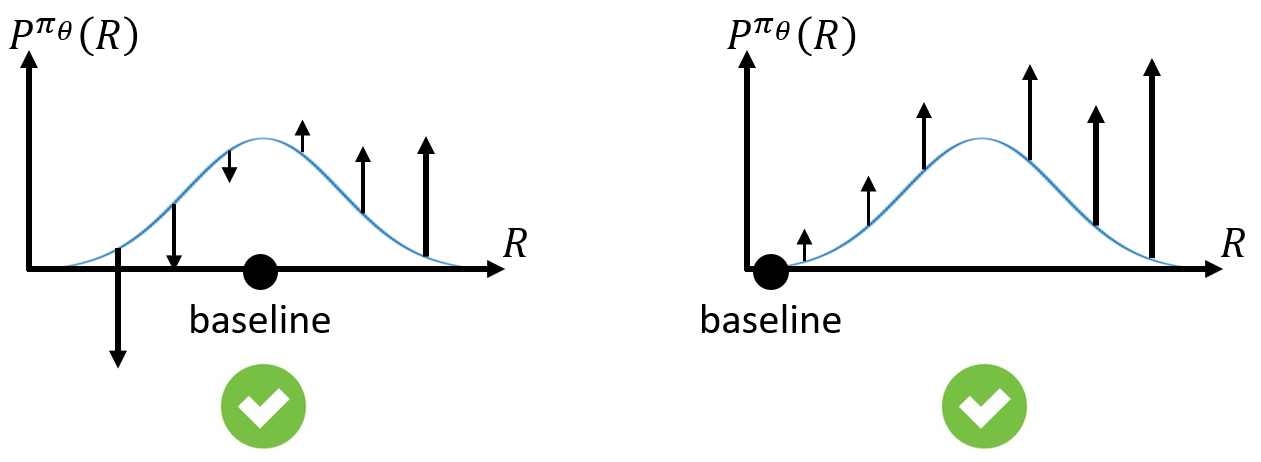}
  \caption{Mean PG}
  \label{fig:mean_pg}
\end{subfigure} \\
\begin{subfigure}{.5\linewidth}
  \centering
  \includegraphics[width=1.\linewidth]{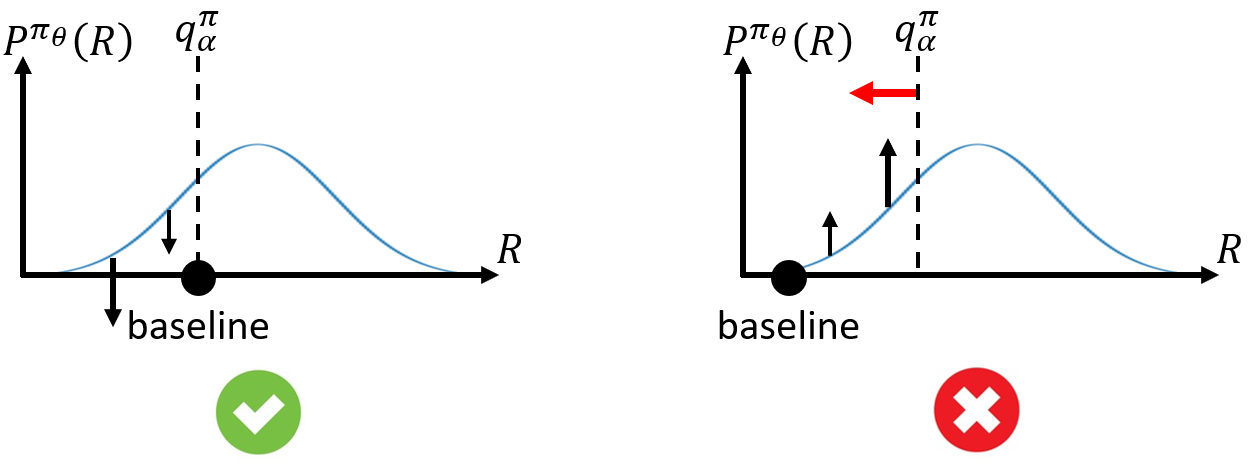}
  \caption{CVaR PG, \cref{eq:PG_rl}}
  \label{fig:cvar_pg}
\end{subfigure}
\caption{\small Illustration of the policy gradient estimation in standard RL. (a) In Mean PG, the expected gradient is independent of the baseline: even if most of the distribution $P^{\pi_\theta}$ seems to be "pushed upwards", its normalization to a total probability of 1 forces the probability of low returns to decrease for that of high returns will increase. (b) In CVaR PG, due to the effect of \cref{eq:conservation_rl}, any baseline except for $q_\alpha^\theta(R)$ leads to biased gradients.}
\label{fig:pg}
\end{figure}


\subsection{PG for CVaR in Meta RL}
\label{app:pg_mrl}

We now turn to PG for CVaR optimization in \textit{MRL}.
We rely on the definitions and notations of \cref{sec:preliminaries}, as well as \cref{def:omega_alpha} and \cref{assumption} in \cref{sec:cvar_mrl}.
Notice that $J_\alpha^\theta(R)$ of \cref{eq:obj_cvar} can be written in integral form as
$$ J_\alpha^\theta(R) = \mathtt{CVaR}_{\tau\sim D}^\alpha \left[ \mathbb{E}_{R\sim P_\tau^\theta}[R] \right] = \int_{\Omega_\alpha^\theta} D(z) \int_{-\infty}^{\infty} x \cdot P_z^\theta(x) \cdot dx \cdot dz . $$

In \cref{eq:pg_rl2} above, the boundary of the integral over the $\alpha$-tail is simply the scalar $q_\alpha^\theta$. In MRL, this is replaced by the boundary of the set $\Omega_\alpha^\theta$, defined in a general topological space. Thus, we begin by characterizing this boundary.

\begin{lemma}[The boundary of $\Omega_\alpha^\theta$]
\label{lemma:boundary}
Under \cref{assumption},
$ \forall z\in\partial\Omega_\alpha^\theta:\ \ \int_{-\infty}^{\infty} x \cdot P_z^\theta(x) \cdot dx = q_\alpha(V_\tau^\theta) $.
\end{lemma}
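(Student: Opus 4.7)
The plan is to recognize that the integral in the statement is exactly $V_z^\theta$ by the definition in \cref{def:omega_alpha}, so the lemma reduces to showing that every point $z \in \partial \Omega_\alpha^\theta$ satisfies $V_z^\theta = q_\alpha(V_\tau^\theta)$. This is a purely topological claim about the level sets of the function $v(z) = V_z^\theta$, which is continuous on $\Omega$ by \cref{assumption}. The whole argument should therefore be short and rely only on the standard characterization of boundaries as preimages of level sets under continuous maps.

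First I would rewrite the set $\Omega_\alpha^\theta$ as the preimage of a closed half-line: $\Omega_\alpha^\theta = v^{-1}\bigl((-\infty, q_\alpha(V_\tau^\theta)]\bigr)$. Since $v$ is continuous, this set is closed, and its complement $\Omega \setminus \Omega_\alpha^\theta = v^{-1}\bigl((q_\alpha(V_\tau^\theta), \infty)\bigr)$ is open. Then I would argue by contraposition: suppose $z \in \partial \Omega_\alpha^\theta$ but $v(z) \ne q_\alpha(V_\tau^\theta)$. If $v(z) < q_\alpha(V_\tau^\theta)$, continuity of $v$ at $z$ produces a neighborhood $U$ of $z$ on which $v < q_\alpha(V_\tau^\theta)$, so $U \subseteq \Omega_\alpha^\theta$ and $z$ lies in the interior of $\Omega_\alpha^\theta$, contradicting $z \in \partial \Omega_\alpha^\theta$. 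Symmetrically, if $v(z) > q_\alpha(V_\tau^\theta)$, continuity gives a neighborhood contained in the complement, again contradicting the boundary assumption.

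Therefore $v(z) = q_\alpha(V_\tau^\theta)$ for every $z \in \partial \Omega_\alpha^\theta$, which is exactly what we needed since
\[
\int_{-\infty}^{\infty} x \cdot P_z^\theta(x) \, dx = \mathbb{E}_{R \sim P_z^\theta}[R] = V_z^\theta = v(z).
\]

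The only potential obstacle is that the continuity assumption on $v$ is not automatic from continuity of $P_z^\theta$ in $x$; it is the additional hypothesis in \cref{assumption} that $v(z)$ is continuous in $z$. I would simply flag that we are invoking that part of the assumption, and also note that we do not need differentiability of $v$ here, nor do we need the no-atoms condition on $P_z^\theta$ (which will be used later for the full gradient computation, not for this lemma). No measure-theoretic subtleties arise because the claim is about a pointwise equality on boundary points in the topological sense on $\Omega$.
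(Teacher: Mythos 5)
Your proposal is correct and matches the paper's proof in substance: the paper invokes the general fact that $\partial v^{-1}(B)\subseteq v^{-1}(\partial B)$ for continuous $v$ with $B=(-\infty,q_\alpha(V_\tau^\theta)]$, and your neighborhood/contraposition argument is simply a direct proof of that same inclusion. Your observation that only the continuity of $v$ (and not the no-atoms condition) is needed here is also consistent with how the paper uses \cref{assumption}.
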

\begin{proof}
Since $v(z) = V_z^\theta$ is a continuous function between topological spaces, by denoting $B=(-\infty,\,q_\alpha(V_\tau^\theta)]$ we have
$$ \partial\Omega_\alpha^\theta = \partial v^{-1}(B) \overbrace{\subseteq}^{\text{continuous } v} v^{-1}(\partial B) = v^{-1}(\{q_\alpha(V_\tau^\theta)\}) = \{ z\in\Omega \,|\, V_z^\theta = q_\alpha(V_\tau^\theta) \}, $$
hence $\forall z\in\partial\Omega_\alpha^\theta:\, V_z^\theta = q_\alpha(V_\tau^\theta)$.
Notice that $V_z^\theta = \int_{-\infty}^{\infty} x P_z^\theta(x) \cdot dx$.
\end{proof}

\begin{figure}
\centering
\includegraphics[width=.95\linewidth]{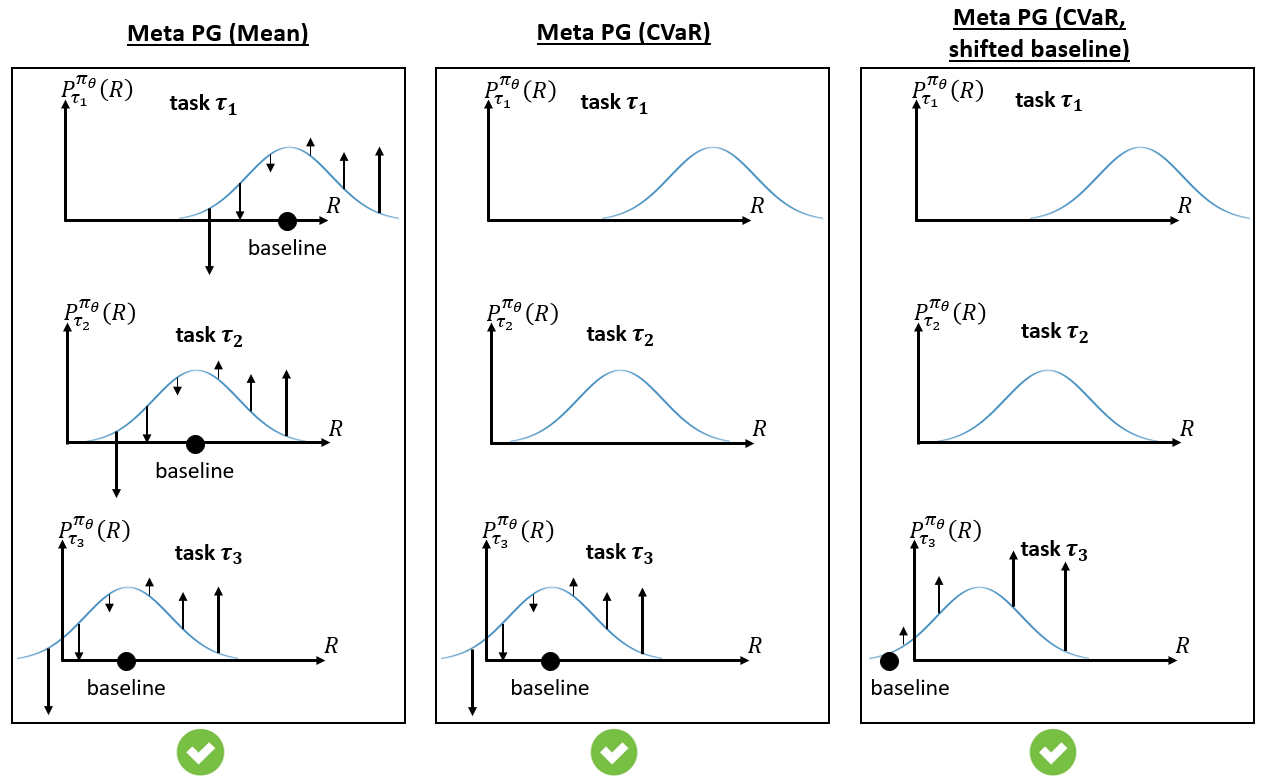}
\caption{\small Illustration of the meta policy gradient estimation. In Mean meta-PG (left, \cref{eq:MPG}), the PG is estimated for all tasks. In CVaR meta-PG (center, \cref{eq:cvar_mpg}), it is only calculated over the low-return tasks.
Since the returns distribution is decoupled from the tasks distribution, shifting the baseline (right) does not insert bias to the PG, in accordance with \cref{theorem:mpg}.}
\label{fig:mpg}
\end{figure}

Finally, we can prove \cref{theorem:mpg}.

\begin{proof}[Proof of \cref{theorem:mpg}]
First, we consider the conservation of probability:
$$ \int_{\Omega_\alpha^\theta} D(z) dz \equiv \alpha $$
The gradient of this integral can be calculated using a high-dimensional generalization of the Leibniz integral rule, named Reynolds Transport Theorem (RTT,~\citet{vector_calculus}):
%
\begin{align}
\label{eq:conservation}
\begin{split}
0 \overbrace{=}^{\substack{\text{derivative of} \\ \text{a constant}}} \nabla_\theta \int_{\Omega_\alpha^\theta} D(z) dz \overbrace{=}^{\text{RTT}}& \int_{\Omega_\alpha^\theta} \nabla_\theta D(z) dz + \int_{\partial\Omega_\alpha^\theta} D(z) \cdot (v_b \cdot n) \cdot dA \\
\overbrace{=}^{\nabla_\theta D(z)\equiv0}& \int_{\partial\Omega_\alpha^\theta} D(z) \cdot (v_b \cdot n) \cdot dA ,
\end{split}
\end{align}
where $n(z,\theta)$ is the outward-pointing unit vector that is normal to the surface $\partial\Omega_\alpha^\theta$, and $v_b(z,\theta)$ is the velocity of the area element on the surface with area $dA$.
Notice that $\nabla_\theta D(z) \equiv 0$ in the last equality expresses a substantial difference from the standard RL settings in \cref{eq:conservation_rl}: there, we have $\nabla_\theta P^\theta(x)$, which does not necessarily vanish.

Next, we turn to the meta policy gradient itself, again using Reynolds Transport Theorem:
\begin{align}
\label{eq:meta_pg}
\begin{split}
\nabla_\theta J_\alpha^\theta(R) &= \nabla_\theta \int_{\Omega_\alpha^\theta} D(z) \int_{-\infty}^{\infty} x P_z^\theta(x) \cdot dx \cdot dz \overbrace{=}^{\text{RTT}} \\
& \left[ \int_{\Omega_\alpha^\theta} D(z) \int_{-\infty}^{\infty} x \nabla_\theta P_z^\theta(x) \cdot dx \cdot dz \right] + \left[ \int_{\partial\Omega_\alpha^\theta} D(z) \left( \int_{-\infty}^{\infty} x P_z^\theta(x) \cdot dx \right) \cdot (v_b \cdot n) \cdot dA \right] \overbrace{=}^{\cref{lemma:boundary}} \\
& \left[ \int_{\Omega_\alpha^\theta} D(z) \int_{-\infty}^{\infty} x \nabla_\theta P_z^\theta(x) \cdot dx \cdot dz \right] + \left[ q_\alpha(V_\tau^\theta) \int_{\partial\Omega_\alpha^\theta} D(z) \cdot (v_b \cdot n) \cdot dA \right] \overbrace{=}^{\cref{eq:conservation}} \\
& \int_{\Omega_\alpha^\theta} D(z) \int_{-\infty}^{\infty} x \nabla_\theta P_z^\theta(x) \cdot dx \cdot dz .
\end{split}
\end{align}

Finally, we show that any $\theta$-independent additive baseline (highlighted in the equation) does not change the gradient calculation:
\begin{align}
\label{eq:meta_pg_baseline}
\begin{split}
\int_{\Omega_\alpha^\theta} D(z) \int_{-\infty}^{\infty} (x{\color{red}-b}) \nabla_\theta P_z^\theta(x) \cdot dx \cdot dz &= \\
\left[ \int_{\Omega_\alpha^\theta} D(z) \int_{-\infty}^{\infty} x \nabla_\theta P_z^\theta(x) \cdot dx \cdot dz \right] &- \left[ \int_{\Omega_\alpha^\theta} D(z) \int_{-\infty}^{\infty} {\color{red} b} \nabla_\theta P_z^\theta(x) \cdot dx \cdot dz \right] = \\
\left[ \int_{\Omega_\alpha^\theta} D(z) \int_{-\infty}^{\infty} x \nabla_\theta P_z^\theta(x) \cdot dx \cdot dz \right] &- \left[ \int_{\Omega_\alpha^\theta} D(z) \cdot {\color{red} b} \cdot \nabla_\theta \left( \int_{-\infty}^{\infty} P_z^\theta(x) \cdot dx \right) \cdot dz \right] \overbrace{=}^{\int_{-\infty}^{\infty} P_z^\theta(x) dx \equiv 1} \\
\left[ \int_{\Omega_\alpha^\theta} D(z) \int_{-\infty}^{\infty} x \nabla_\theta P_z^\theta(x) \cdot dx \cdot dz \right] &- 0 = \nabla_\theta J_\alpha^\theta(R) ,
\end{split}
\end{align}
which completes the proof.
Notice that we used the identity $\nabla_\theta \int_{-\infty}^{\infty} P_z^\theta(x) dx = \nabla_\theta 1 = 0$; this does not hold for the analogous term in the standard RL settings in \cref{eq:conservation_rl}, $\nabla_\theta \int_{-\infty}^{q_\alpha^\theta(x)} P^\theta(x) dx$, whose gradient depends on $\nabla_\theta q_\alpha^\theta(x)$ according to the Leibniz integral rule.

\end{proof}





\section{Proof of \cref{prop:efficiency}}
\label{app:efficiency}

\begin{proof}

Recall that by \cref{eq:PG_est_mrl}, $G = \frac{1}{\alpha N} \sum_{i=1}^N \pmb{1}_{R_i\le \hat{q}_\alpha^\theta} \sum_{m=1}^M g_{i,m}$.
Denoting $G_i = \sum_{m=1}^M g_{i,m}$ and substituting $\hat{q}_\alpha^\theta={q}_\alpha^\theta$, we have
$$ G = \frac{1}{N} \sum_{i=1}^N \alpha^{-1} \pmb{1}_{R_i\le {q}_\alpha^\theta} G_i. $$
\paragraph{Expectation:}
Since $\{G_i\}$ are i.i.d, and using the law of total probability, we obtain
\begin{align*}
    \mathbb{E}_D [\alpha^{-1} \pmb{1}_{R_i\le {q}_\alpha^\theta} G_i] = &\alpha \cdot \left( \alpha^{-1} \cdot 1 \cdot \mathbb{E}_D [G_1 \,|\, R_1\le {q}_\alpha^\theta] \right) + (1-\alpha) \cdot \left( \alpha^{-1} \cdot 0 \cdot \mathbb{E}_D [G_1 \,|\, R_1> {q}_\alpha^\theta] \right) \\
    = &\mathbb{E}_D [G_1 \,|\, R_1\le {q}_\alpha^\theta] .
\end{align*}
By switching the task sample distribution to $D_\alpha^\theta$, and using the definition of $D_\alpha^\theta$, we simply have
$$ \mathbb{E}_{D_\alpha^\theta} [\alpha^{-1} \pmb{1}_{R_i\le {q}_\alpha^\theta} G_i] = \alpha^{-1} \mathbb{E}_{D_\alpha^\theta} [G_1] = \alpha^{-1} \mathbb{E}_D [G_1 \,|\, R_1\le {q}_\alpha^\theta] . $$
Together, we obtain $\mathbb{E}_{D_\alpha^\theta} [\alpha G] = \mathbb{E}_D [G]$ as required.

\paragraph{Variance:}
For the original distribution, since $\{G_i\}$ are i.i.d, we have
\begin{align*}
    N \cdot \mathrm{Var}_D (G)
    &= \mathrm{Var}_D (\alpha^{-1} \pmb{1}_{R_1\le {q}_\alpha^\theta} G_1) \\
    &= \mathbb{E}_D [\alpha^{-2} \pmb{1}_{R_1\le {q}_\alpha^\theta} G_1^2] - \mathbb{E}_D [\alpha^{-1} \pmb{1}_{R_1\le {q}_\alpha^\theta} G_1]^2 \\
    &= \mathbb{E}_D [\alpha \alpha^{-2} G_1^2 \,|\, R_1\le {q}_\alpha^\theta] - \mathbb{E}_D [\alpha \alpha^{-1} G_1 \,|\, R_1\le {q}_\alpha^\theta]^2 \\
    &= \alpha^{-1} \mathbb{E}_{D_\alpha^\theta} [ G_1^2] - \mathbb{E}_{D_\alpha^\theta} [G_1]^2 \\
    &\ge \alpha^{-1} (\mathbb{E}_{D_\alpha^\theta} [ G_1^2] - \mathbb{E}_{D_\alpha^\theta} [G_1]^2) \\
    &= \alpha^{-1} \mathrm{Var}_{D_\alpha^\theta} (G_1) .
\end{align*}
For the tail distribution $D_\alpha^\theta$, however,
\begin{align*}
    N \cdot \mathrm{Var}_{D_\alpha^\theta} (\alpha G) = \alpha^2 \mathrm{Var}_{D_\alpha^\theta} (\alpha^{-1} \pmb{1}_{R_1\le {q}_\alpha^\theta} G_1) = \mathrm{Var}_{D_\alpha^\theta} (G_1) ,
\end{align*}
which completes the proof.

\end{proof}


\section{The Cross Entropy Method}
\label{app:cem}

\subsection{Background}
\label{app:cem_background}

The Cross Entropy Method (CEM,~\citet{CE_tutorial}) is a general approach to rare-event sampling and optimization.
In this work, we use its sampling version to sample high-risk tasks from the tail of $D$.
As described in \cref{sec:roml}, the CEM repeatedly samples from the parameterized distribution $D_\phi$, and updates $\phi$ according to the $\beta$-tail of the sampled batch.
Since every iteration focuses on the tail of its former, we intuitively expect exponential convergence to the tail of the original distribution.
While theoretical convergence analysis does not guarantee the exponential rate \citep{RareEE}, practically, the CEM often converges within several iterations.
For clarity, we provide the pseudo-code for the basic CEM in \cref{algo:cem}.
In this version, the CEM repeatedly generates samples from the tail of the given distribution $D_{\phi_0}$.

\begin{algorithm}
\caption{The Cross Entropy Method (CEM)}
\label{algo:cem}
\DontPrintSemicolon
\SetAlgoNoLine
\SetNoFillComment

{\bf Input}: distribution $D_{\phi_0}$; score function $R$; target level $q$; batch size $N$; CEM quantile $\beta$.\;
 \BlankLine
 \BlankLine
 $\phi\leftarrow {\phi_0}$\;
 
 \While{true}{
    \tcp{Sample}
    Sample $z \sim D_{\phi}^N$\;
	$w_i \leftarrow D_{{\phi_0}}(z_i) / D_{\phi}(z_i) \quad (1\le i\le N)$\;
	Print $z$\;
    \tcp{Update}
    $q^\prime \leftarrow \max\left(q,\ q_\beta\left(\{R(z_i)\}_{i=1}^N\right) \right)$\;
	$\phi \leftarrow \mathrm{argmax}_{\phi^{\prime}} \sum_{i=1}^N w_i \pmb{1}_{R(z_i)\le q^\prime} \log D_{\phi^{\prime}}(z_i)$\;
 }
\end{algorithm}


\subsection{Discussion}
\label{app:cem_discussion}

The CEM is the key to the flexible robustness level of RoML (\cref{algo:roml}): it can learn to sample not only the single worst-case task, but all the $\alpha$ tasks with the lowest returns. 

The CEM searches for a task distribution within a parametric family of distributions. This approach can handle infinite task spaces, and learn the difficulty of tasks in the entire task space from a mere finite sample of tasks.
For example, assume that the tasks correspond to environment parameters that take continuous values within some bounded box (as in \cref{sec:mujoco} and \cref{sec:sine}). The CEM can fit a distribution over an entire subset of the box -- from a mere finite batch of tasks.
This property lets the CEM learn the high-risk tasks quickly and accelerates the meta-training, as demonstrated in \cref{sec:experiments} and \cref{app:cem_res}.

On the other hand, this approach relies on the structure in the task space.
If the tasks do not have a natural structure like the ones in the bounded box, it is not trivial to define the parametric family of distributions.
This is the case in certain supervised meta learning problems.
For example, in the common meta-classification problem \citep{maml}, the task space consists of subsets of classes to classify. This is a discrete space without a trivial metric between tasks. Hence, it is difficult for the CEM to learn the $\alpha$ lowest-return tasks from a finite sample.
Thus, while RoML is applicable to supervised meta learning as well as MRL, certain task spaces require further adjustment, such as a meaningful embedding of the task space.
This challenge is left for future work.


\FloatBarrier
\section{Experiments: Detailed Settings and Results}
\label{app:detailed_results}

\subsection{Khazad Dum}
\label{app:kd}

\begin{quote}
\textit{At the end of the hall the floor vanished and fell to an unknown depth.
The outer door could only be reached by a slender bridge of stone, without
kerb or rail, that spanned the chasm with one curving spring of fifty feet.
It was an ancient defence of the Dwarves against any enemy that might capture
the First Hall and the outer passages. They could only pass across it in single
file.}~\citep{tolkien}
\end{quote}


The detailed settings of the Khazad Dum environment presented in \cref{sec:kd} are as follows.
Every task is carried for $K=4$ episodes of $T=32$ times steps.
The return corresponds to the undiscounted sum of the rewards ($\gamma=1$).
Every time step has a cost of $1/T$ points if the L1-distance of the agent from the target is larger than 5; and for distances between 0 and 5, the cost varies linearly between 0 and $1/T$.
By reaching the destination, the agent obtains a reward of $5/T$, and has no more costs for the rest of the episode.
By falling to the abyss, the agent can no longer reach the goal and is bound to suffer a cost of $1/T$ for every step until the end of the episode.
Every step, the agent observes its location (represented using a soft one-hot encoding, similarly to \citet{cesor}) and chooses whether to move left, right, up or down. If the agent attempts to move into a wall, it remains in place.

The tasks are characterized by the rain intensity, distributed $\tau\sim Exp(0.1)$.
The rain only affects the episode when the agent crosses the bridge: then, the agent suffers from an additive normally-distributed action noise $\mathcal{N}(0,\tau^2)$, in addition to a direct damage translated into a cost of $3\cdot\tau$.
For RoML, the CEM is implemented over the exponential family of distributions $Exp(\phi)$ with $\phi_0=0.1$ and $\beta=0.05$. In this toy benchmark we use no regularization ($\nu=0$). 

In addition to the test returns throughout meta-training shown in \cref{fig:kd_trajs}, \cref{fig:kd} displays the final test returns at the end of the meta-training, over 30 seeds and 3000 test tasks per seed.



\begin{figure}[!h]
\centering
\includegraphics[width=.64\linewidth]{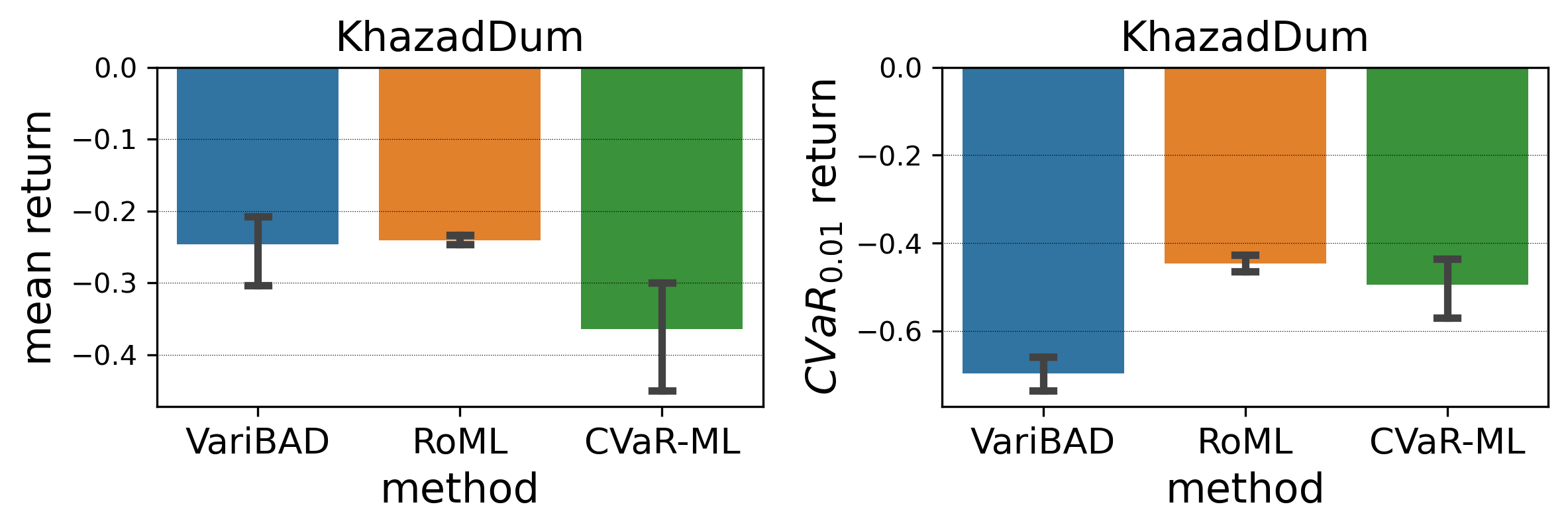}
\caption{\small Khazad-Dum: Mean and CVaR returns over 30 seeds and 3000 test tasks.}
\label{fig:kd}
\end{figure}


\FloatBarrier
\subsection{Continuous Control}
\label{app:mujoco}

In all the MuJoCo benchmarks introduced in \cref{sec:mujoco}, \ra{each task's meta-rollout consists of 2 episodes} $\times$ 200 time-steps per episode.
Below we describe the task distributions, as well as their parameterization for the CEM. In each benchmark, we used CEM quantile $\beta=0.2$ and regularization of $\nu=0.2$ samples per batch.
%
\begin{itemize}
    \item \textbf{HalfCheetah-Vel}: The original task distribution is uniform $\tau\sim U([0,7])$ in HalfCheetah (the task space $[0,7]$ was extended in comparison to \citet{varibad}, to create a more significant tradeoff between tasks). We rewrite it as $\tau=7\tilde{\tau},\ \tilde{\tau}\sim Beta(2\phi,2-2\phi)$ with $\phi_0=0.5$ (leading to $Beta(1,1)$, which is indeed the uniform distribution). The CEM learns to modify $\phi$. Notice that this parameterization satisfies $\mathbb{E}_{D_\phi}[\tilde{\tau}] = \phi$.
    \item \textbf{Humanoid-Vel}: Same as HalfCheetah-Vel, with task space $[0,2.5]$ instead of $[0,7]$.
    \item \textbf{Ant-Goal}: The target location is random within a circle of radius 5. We represent the target location in polar coordinates, and write $r \sim Beta(2\phi_1,2-2\phi_1)$ and $\theta \sim Beta(2\phi_2,2-2\phi_2)$ (up to multiplicative factors $5$ and $2\pi$). The original distribution parameter is $\phi_0=(0.5,0.5)$, and the CEM learns to modify it.
    \item \textbf{HalfCheetah-Mass, Humanoid-Mass, Ant-Mass}: The task $\tau\in[0.5,2]$ corresponds to the multiplicative factor of the body mass (e.g., $\tau=2$ is a doubled mass). The original task distribution is uniform over the log factor, i.e., $\log_2 \tau \sim U([-1,1])$. Again, we re-parameterize the uniform distribution as $Beta$, and learn to modify its parameter.
    \item \textbf{HalfCheetah-Body, Humanoid-Body, Ant-Body}: The 3 components of the task correspond to multiplicative factors of different physical properties, and each of them is distributed independently and uniformly in log, i.e., $\forall 1\le j\le3:\ \log_2 \tau_j \sim U([-1,1])$. We re-parameterize this as 3 independent $Beta$ distributions with parameters $\phi=(\phi_1,\phi_2,\phi_3)$.
    \item \textbf{HalfCheetah 10D-task}: Again, the task components correspond to multiplicative factors of different properties of the model. This time, there are 10 different properties (i.e., the task space is 10-dimensional), but each of them varies in a smaller range: $\log_2 \tau_j \sim U([-0.5,0.5])$. Each such property is a vector, and is multiplied by $\tau_j$ when executing the task $\tau$. The 10 properties are selected randomly, among all the variables of type \textit{float ndarray} in \textit{env.model}. We generate 3 such MRL environments -- with 3 different random sets of 10 task-variables each. Some examples for properties are inertia, friction and mass.
\end{itemize}

In the experiments, we rely on the official implementations of VariBAD~\citep{varibad} and PEARL~\citep{pearl}, both published under the MIT license.
CVaR-ML and RoML are implemented on top of these baseline, and their running times are indistinguishable from the baselines.
All experiments were performed on machines with Intel Xeon 2.2 GHZ CPU and NVIDIA's V100 GPU.
Each experiment (meta-training and testing) required 12-72 hours, depending on the environment and the baseline algorithm.

\cref{tab:mujoco_mean} and \cref{fig:mujoco_tasks2} present detailed results for our MuJoCo experiments, in addition to the results presented in \cref{sec:mujoco}.

\begin{table}[]
\caption{Mean return over 1000 test tasks, for different models and MuJoCo environments. Standard deviation is presented over 30 seeds. CVaR returns are displayed in \cref{tab:mujoco}.}
\centering
\label{tab:mujoco_mean}
\small\addtolength{\tabcolsep}{-1pt}
\begin{tabular}{|l|ccc|ccc|}
\toprule
\multicolumn{1}{|c|}{\multirow{2}{*}{}} & \multicolumn{3}{c|}{HalfCheetah}                              & \multicolumn{3}{c|}{HalfCheetah 10D-task}                        \\
\multicolumn{1}{|c|}{}                  & Vel               & Mass                & Body                & (a)                 & (b)                  & (c)                 \\ \midrule
CeSoR                                   & $-1316 \pm 18$    & $1398 \pm 31$       & $1008 \pm 34$       & $1222 \pm 23$       & $1388 \pm 20$       & $1274 \pm 32$        \\
PAIRED                                  & $-545 \pm 55$     & $662 \pm 30$        & $492 \pm 51$       & $551 \pm 53$         & $706 \pm 36$        & $561 \pm 65$  \\
CVaR-ML                                 & $-574 \pm 22$     & $113 \pm 8$        & $193 \pm 6$        & $263 \pm 15$        & $250 \pm 11$        & $192 \pm 5$       \\
PEARL                                   & $-534 \pm 15$     & $\pmb{1726 \pm 13}$ & $\pmb{1655 \pm 6}$ & $1843 \pm 9$       & $1866 \pm 13$       & $1425 \pm 6$        \\
VariBAD                                 & $\pmb{-82 \pm 2}$ & $1558 \pm 32$       & $1616 \pm 28$       & $\pmb{1893 \pm 6}$ & $\pmb{1984 \pm 67}$ & $\pmb{1617 \pm 12}$ \\
RoML (VariBAD)                          & $-95 \pm 3$       & $1581 \pm 32$       & $1582 \pm 21$       & $1819 \pm 8$       & $\pmb{1950 \pm 20}$        & $\pmb{1616 \pm 13}$       \\ 
RoML (PEARL)                            & $-519 \pm 15$     & $1553 \pm 18$       & $1437 \pm 8$       & $1783 \pm 7$       & $1859 \pm 10$        & $1399 \pm 8$       \\ \midrule
\multirow{2}{*}{}                       & \multicolumn{3}{c|}{Humanoid}                                 & \multicolumn{3}{c|}{Ant}                                         \\
                                        & Vel               & Mass                & Body                & Goal                & Mass                 & Body                \\ \midrule
VariBAD                                 & $\pmb{880 \pm 4}$       & $\pmb{1645 \pm 22}$ & $\pmb{1678 \pm 17}$ & $\pmb{-229 \pm 3}$        & $\pmb{1473 \pm 3}$         & $\pmb{1476 \pm 1}$  \\
RoML (VariBAD)                          & $\pmb{883 \pm 4}$ & $1580 \pm 17$       & $1618 \pm 18$       & $\pmb{-224 \pm 3}$  & $\pmb{1475 \pm 2}$   & $1472 \pm 1$        \\ \bottomrule
\end{tabular}
\end{table}

\begin{figure}
\centering
\begin{subfigure}{.24\linewidth}
  \centering
  \includegraphics[width=1.\linewidth]{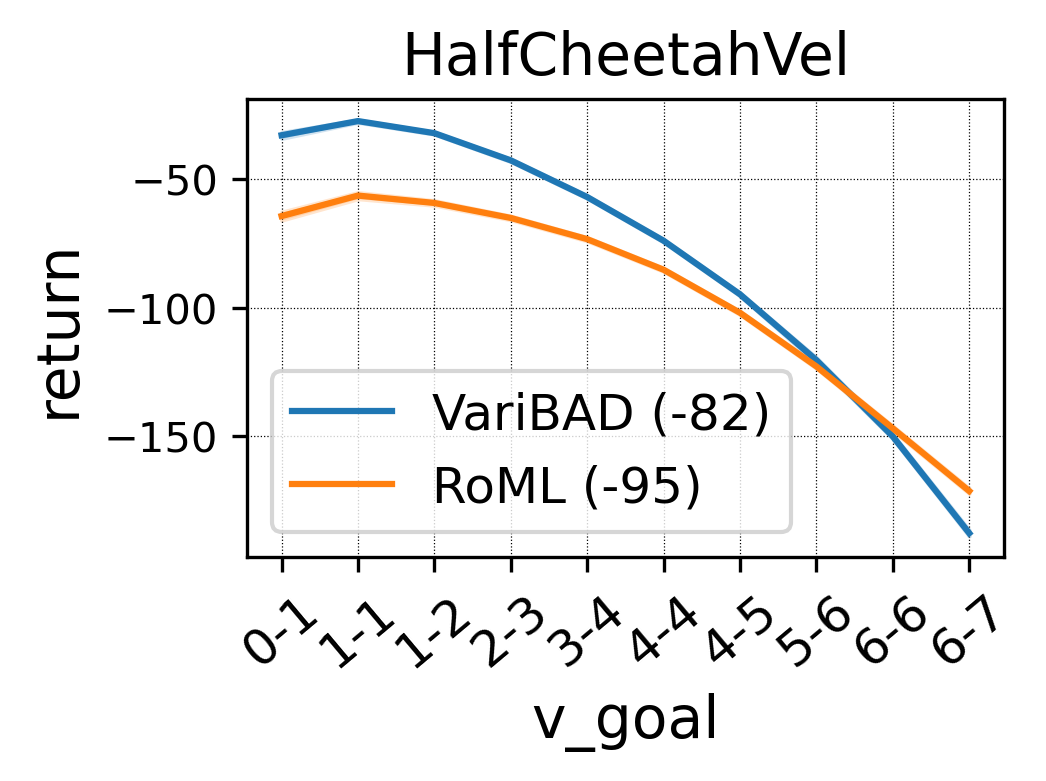}
  \caption{HalfCheetah-Vel}
  \label{fig:hcv_tasks}
\end{subfigure}
\begin{subfigure}{.74\linewidth}
  \centering
  \includegraphics[width=1.\linewidth]{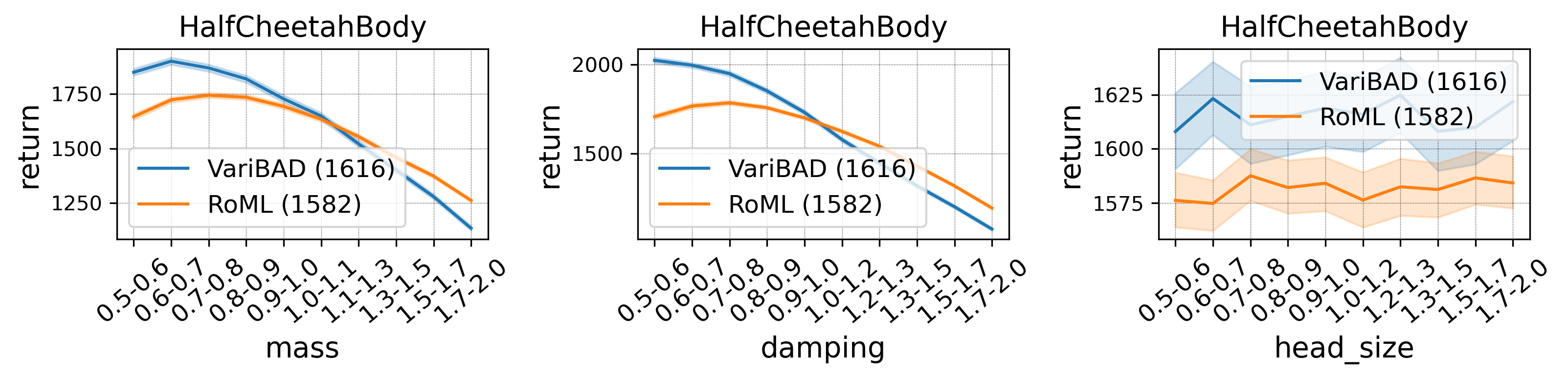}
  \caption{HalfCheetah-Body (mass, damping and head-size)}
  \label{fig:hcb_tasks}
\end{subfigure}
\begin{subfigure}{.24\linewidth}
  \centering
  \includegraphics[width=1.\linewidth]{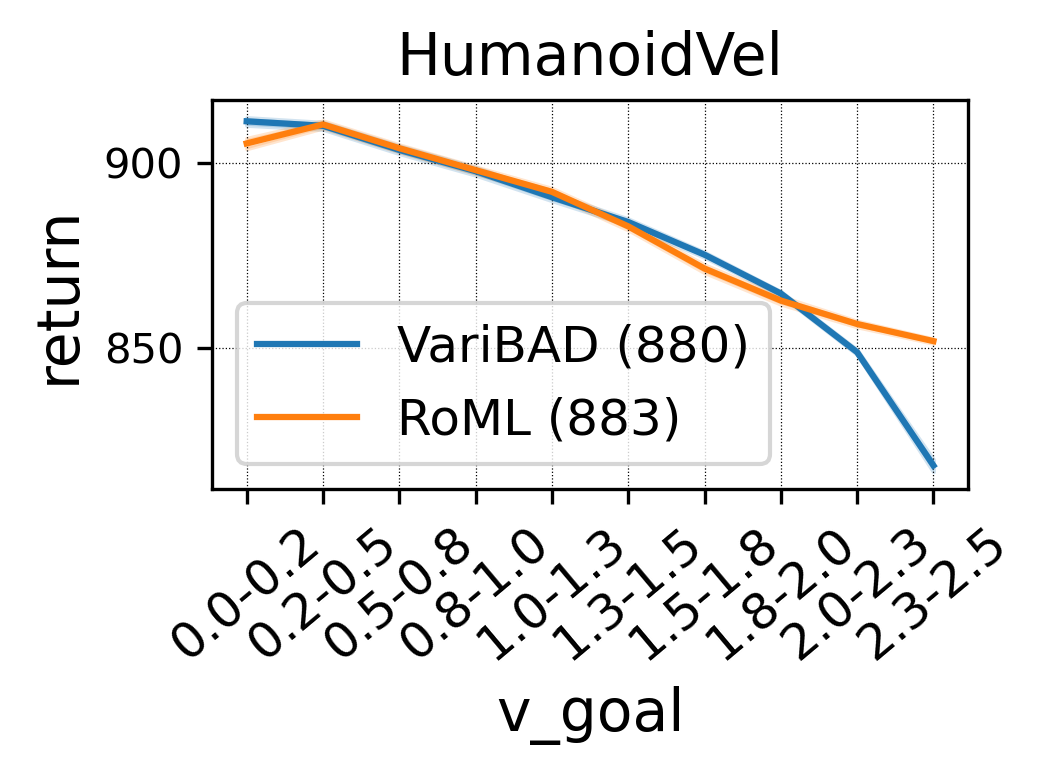}
  \caption{Humanoid-Vel}
  \label{fig:humv_tasks}
\end{subfigure}
\begin{subfigure}{.74\linewidth}
  \centering
  \includegraphics[width=1.\linewidth]{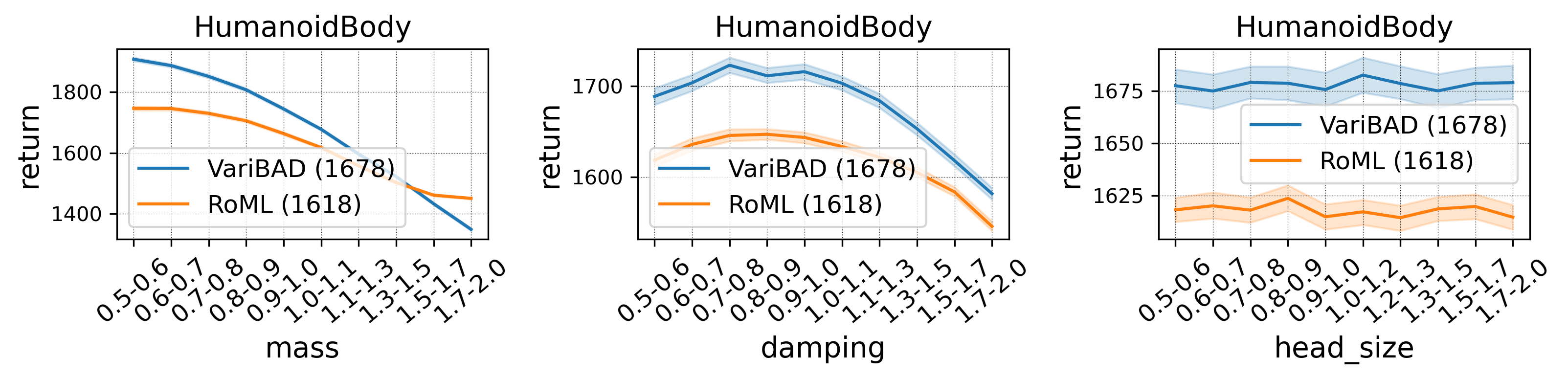}
  \caption{Humanoid-Body (mass, damping and head-size)}
  \label{fig:humb_tasks}
\end{subfigure}
\begin{subfigure}{.24\linewidth}
  \centering
  \includegraphics[width=1.\linewidth]{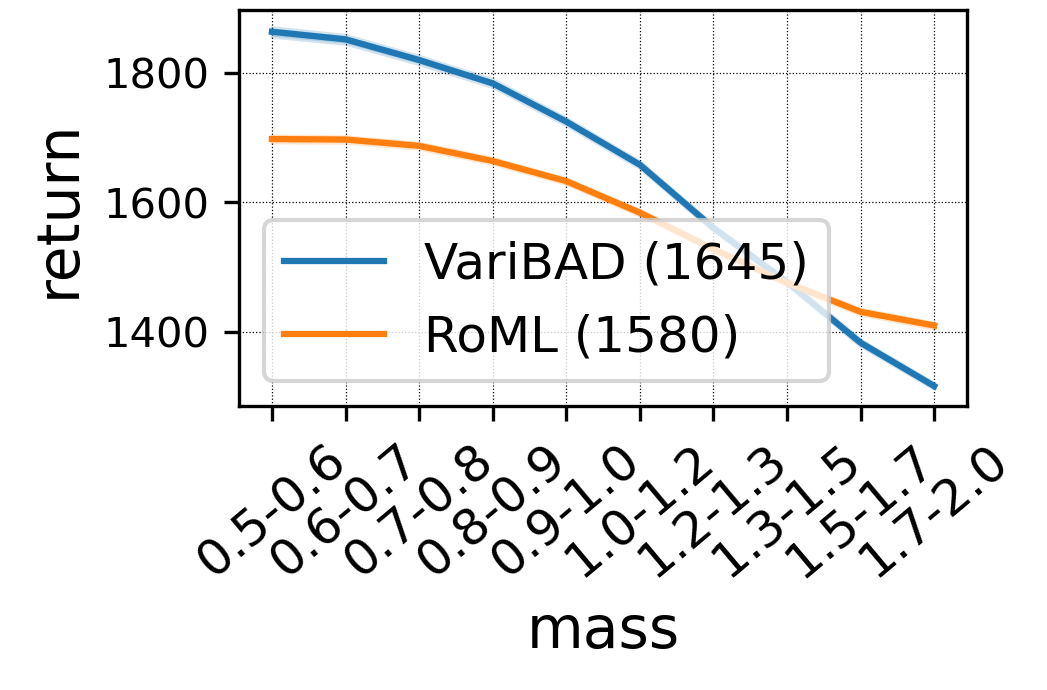}
  \caption{Humanoid-Mass}
  \label{fig:humm_tasks}
\end{subfigure}
\begin{subfigure}{.24\linewidth}
  \centering
  \includegraphics[width=1.\linewidth]{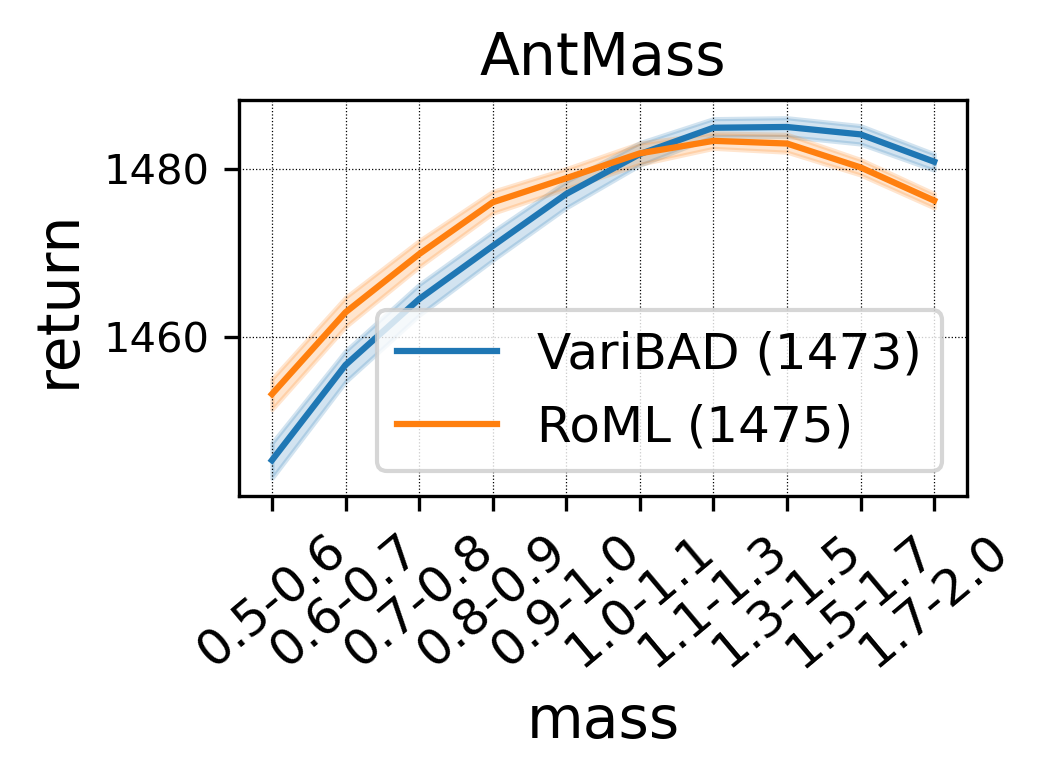}
  \caption{Ant-Mass}
  \label{fig:antm_tasks}
\end{subfigure}
\begin{subfigure}{.49\linewidth}
  \centering
  \includegraphics[width=1.\linewidth]{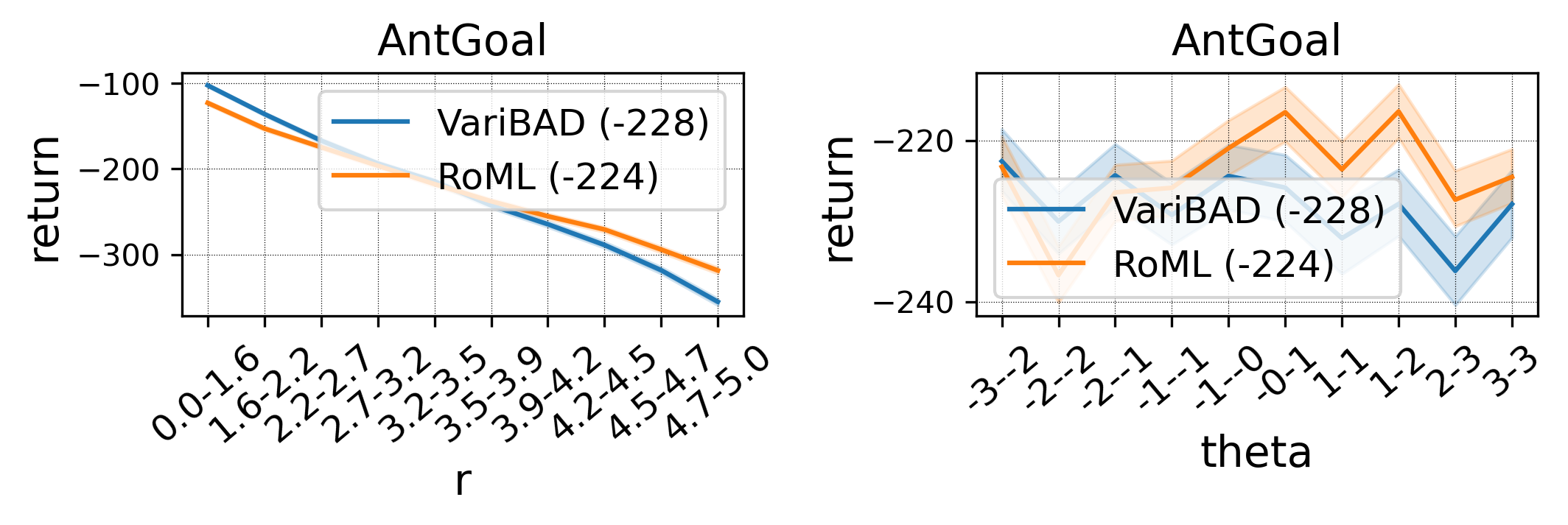}
  \caption{Ant-Goal}
  \label{fig:antg_tasks}
\end{subfigure}
\caption{\small Average return per range of tasks in the various MuJoCo environments (global average in parentheses). HalfCheetah-Mass is displayed in \cref{fig:hcm_tasks}.}
\label{fig:mujoco_tasks2}
\end{figure}


\FloatBarrier
\subsection{The Cross Entropy Method}
\label{app:cem_res}

We implemented RoML using the Dynamic Cross Entropy Method \cempypi{implementation} of \citet{cross_entropy_method}.
Below we concentrate results related to the CEM functionality in all the experiments:
\begin{itemize}
    \item \textbf{Learned sample distribution}: One set of figures corresponds to the learned sample distribution, as measured via $\phi$ throughout the meta-training.
    \item \textbf{Sample returns}: A second set of figures corresponds to the returns over the sampled tasks (corresponding to $D_\phi$): ideally, we would like them to align with the $\alpha$-tail of the reference returns (corresponding to $D_{\phi_0}$). Thus, the figures present the mean sample return along with the mean and CVaR reference returns (the references are estimated from the sample returns using Importance Sampling weights). In most figures, we see that the sample returns (in green) shift significantly from the mean reference return (in blue) towards the CVaR reference return (orange), at least for part of the training. Note that in certain environments, the distinction between difficulty of tasks can only be made after the agent has already learned a basic meaningful policy, hence the sample returns do not immediately deviate from the mean reference.
\end{itemize}

\begin{figure}[!h]
\centering
\begin{subfigure}{.28\linewidth}
  \centering
  \includegraphics[width=1.\linewidth]{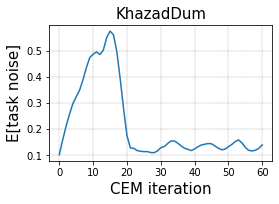}
  \caption{Learned sample distribution}
\end{subfigure}
\begin{subfigure}{.38\linewidth}
  \centering
  \includegraphics[width=1.\linewidth]{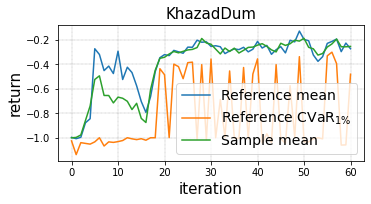}
  \caption{Sample returns}
\end{subfigure}
\caption{\small The CEM in Khazad-Dum. Note that the effect of the CEM concentrates at the first half of the meta-training; once the meta-policy learns to focus on the long path, the agent becomes invariant to the sampled tasks, and the sampler gradually returns to the original task distribution $\phi\approx\phi_0$.}
\label{fig:kd_cem}
\end{figure}

\begin{figure}[!h]
\centering
\begin{subfigure}{.24\linewidth}
  \centering
  \includegraphics[width=1.\linewidth]{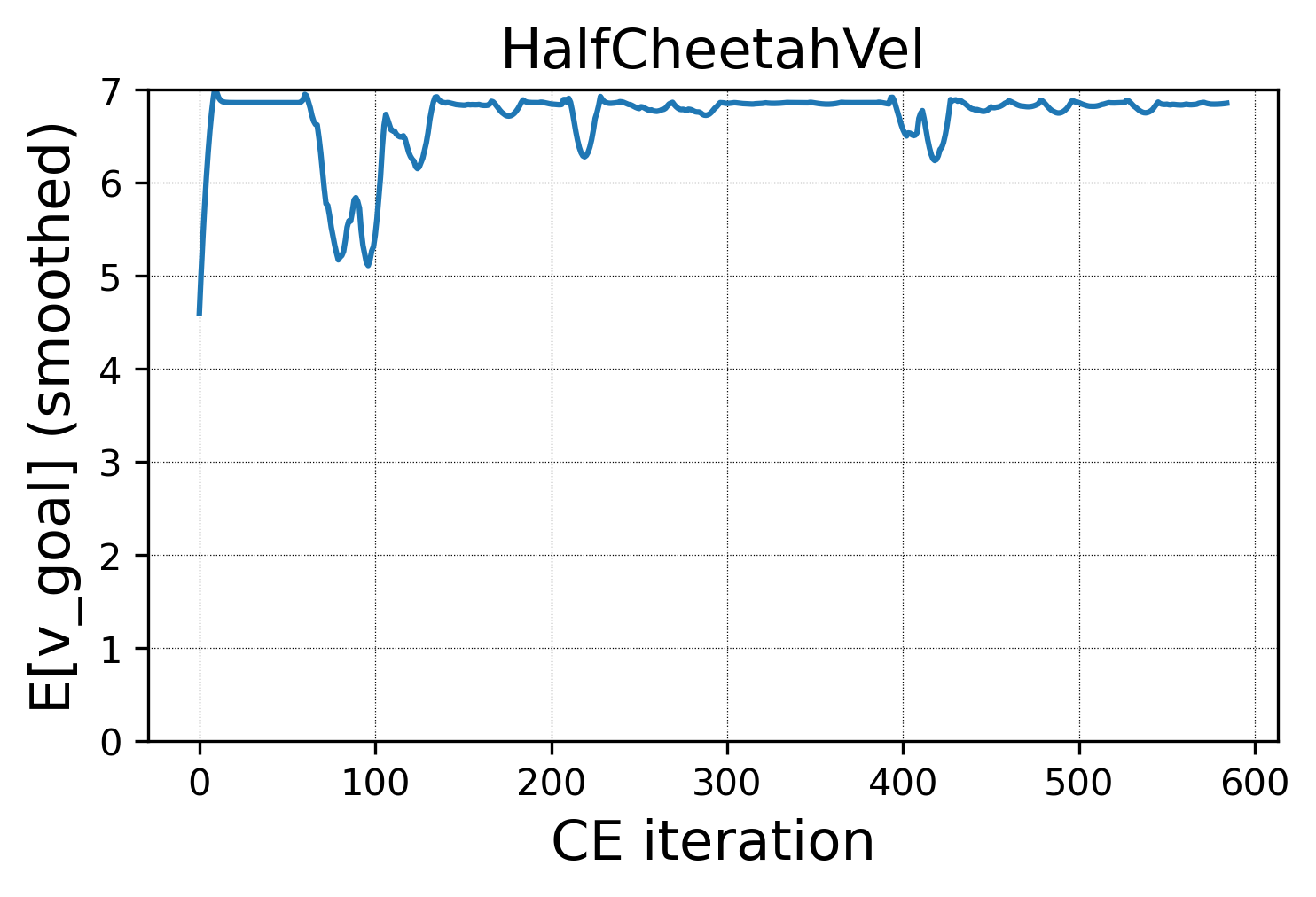}
  \caption{HalfCheetah-Vel}
  \label{fig:hcv_dist}
\end{subfigure}
\begin{subfigure}{.24\linewidth}
  \centering
  \includegraphics[width=1.\linewidth]{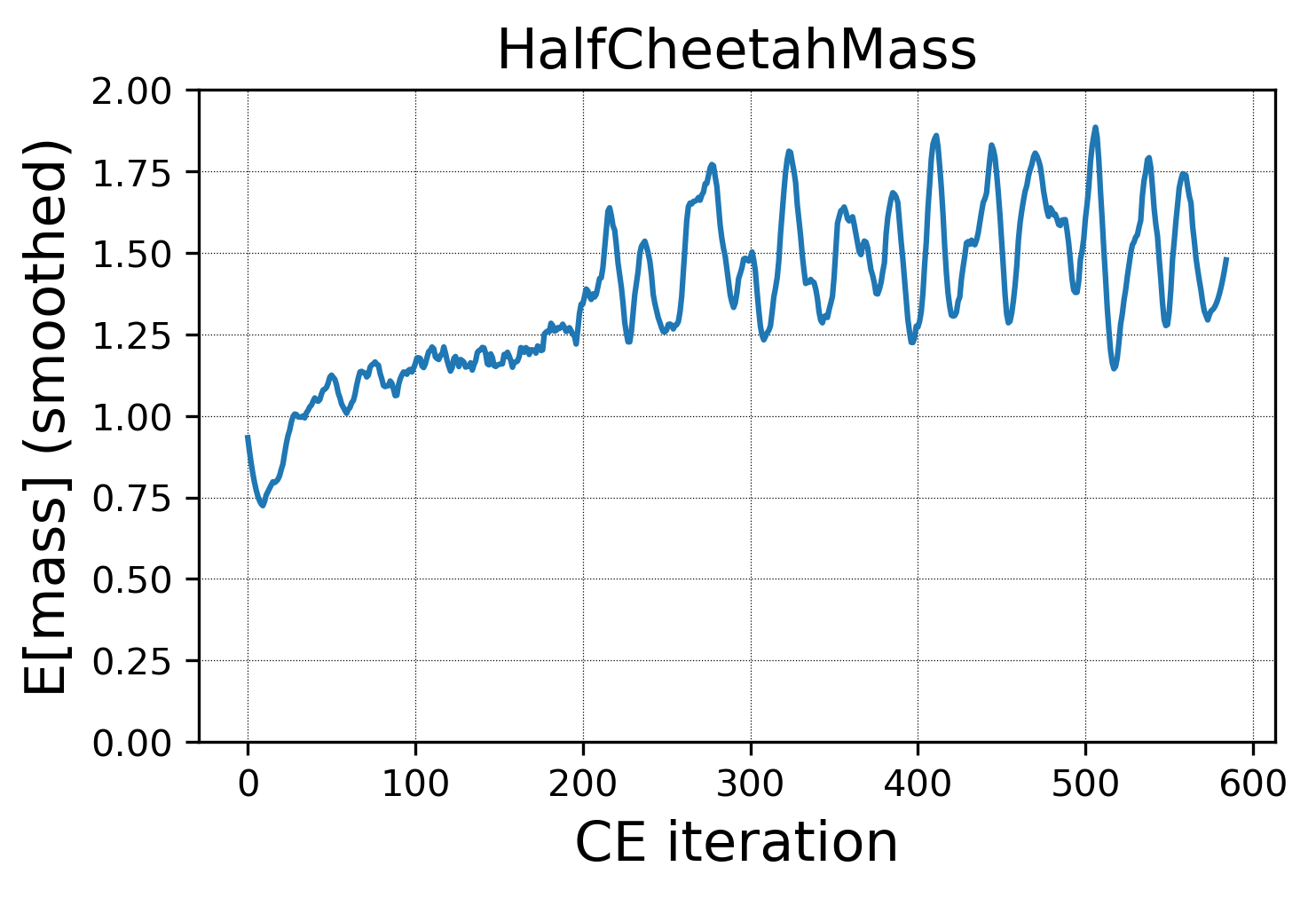}
  \caption{HalfCheetah-Mass}
  \label{fig:hcm_dist}
\end{subfigure}
\begin{subfigure}{.24\linewidth}
  \centering
  \includegraphics[width=1.\linewidth]{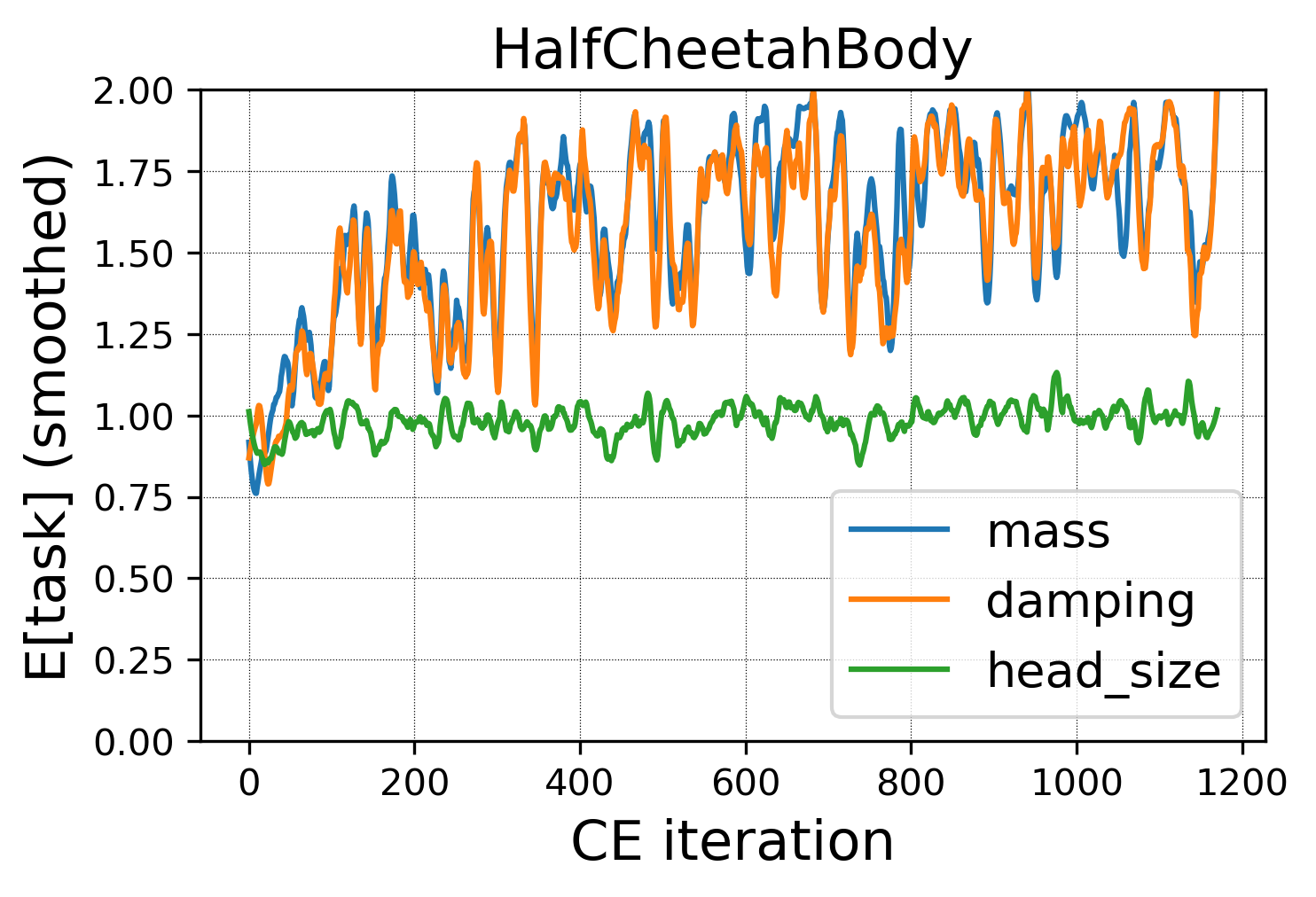}
  \caption{HalfCheetah-Body}
  \label{fig:hcb_dist}
\end{subfigure}
\begin{subfigure}{.24\linewidth}
  \centering
  \includegraphics[width=1.\linewidth]{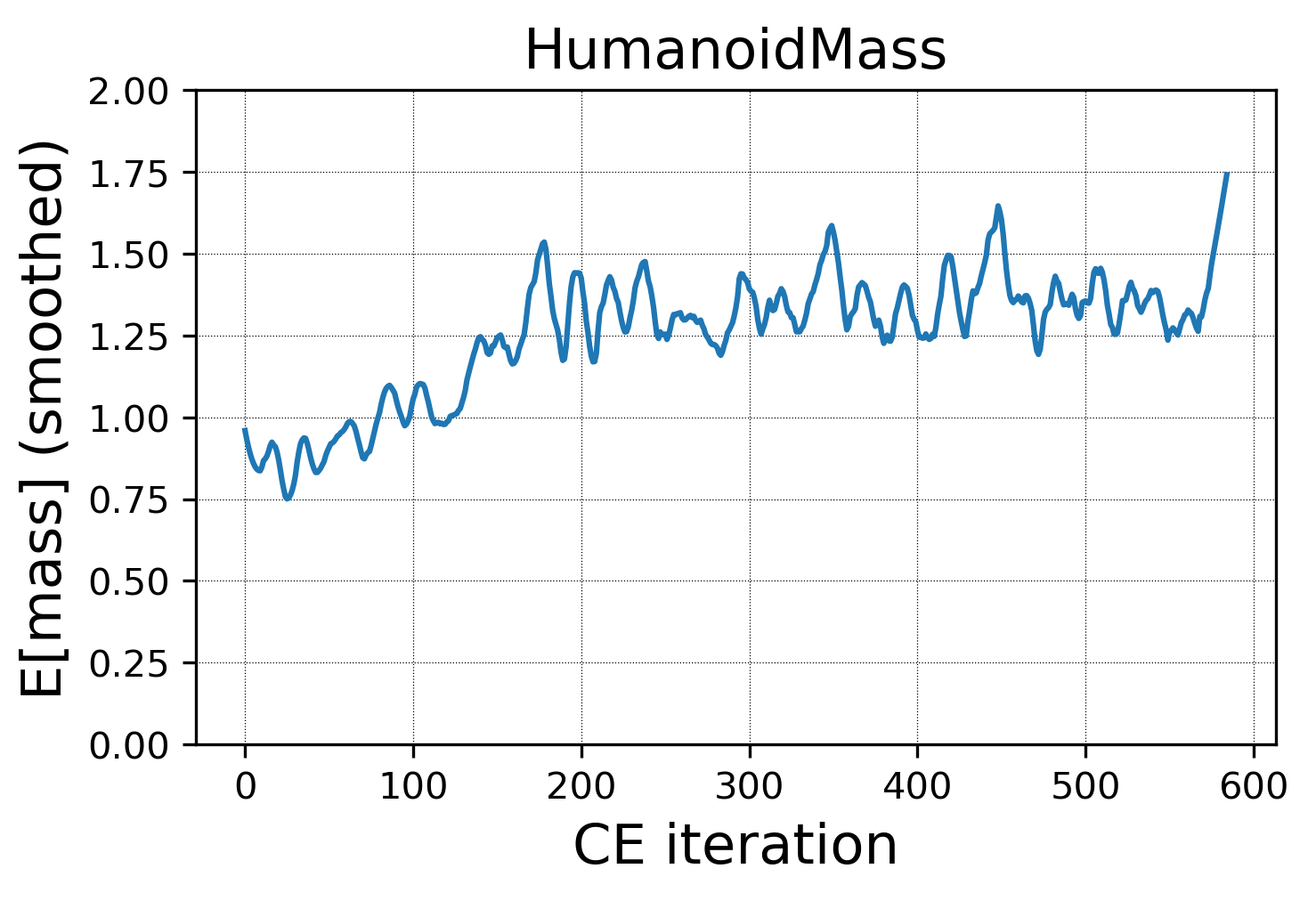}
  \caption{Humanoid-Mass}
  \label{fig:humm_dist}
\end{subfigure}
\caption{\small Learned sample distribution in MuJoCo benchmarks. Notice that for HalfCheetah-Body, the CEM has to control 3 different task parameters simultaneously.}
\label{fig:mujoco_dist}
\end{figure}

\begin{figure}[!h]
\centering
\begin{subfigure}{.24\linewidth}
  \centering
  \includegraphics[width=1.\linewidth]{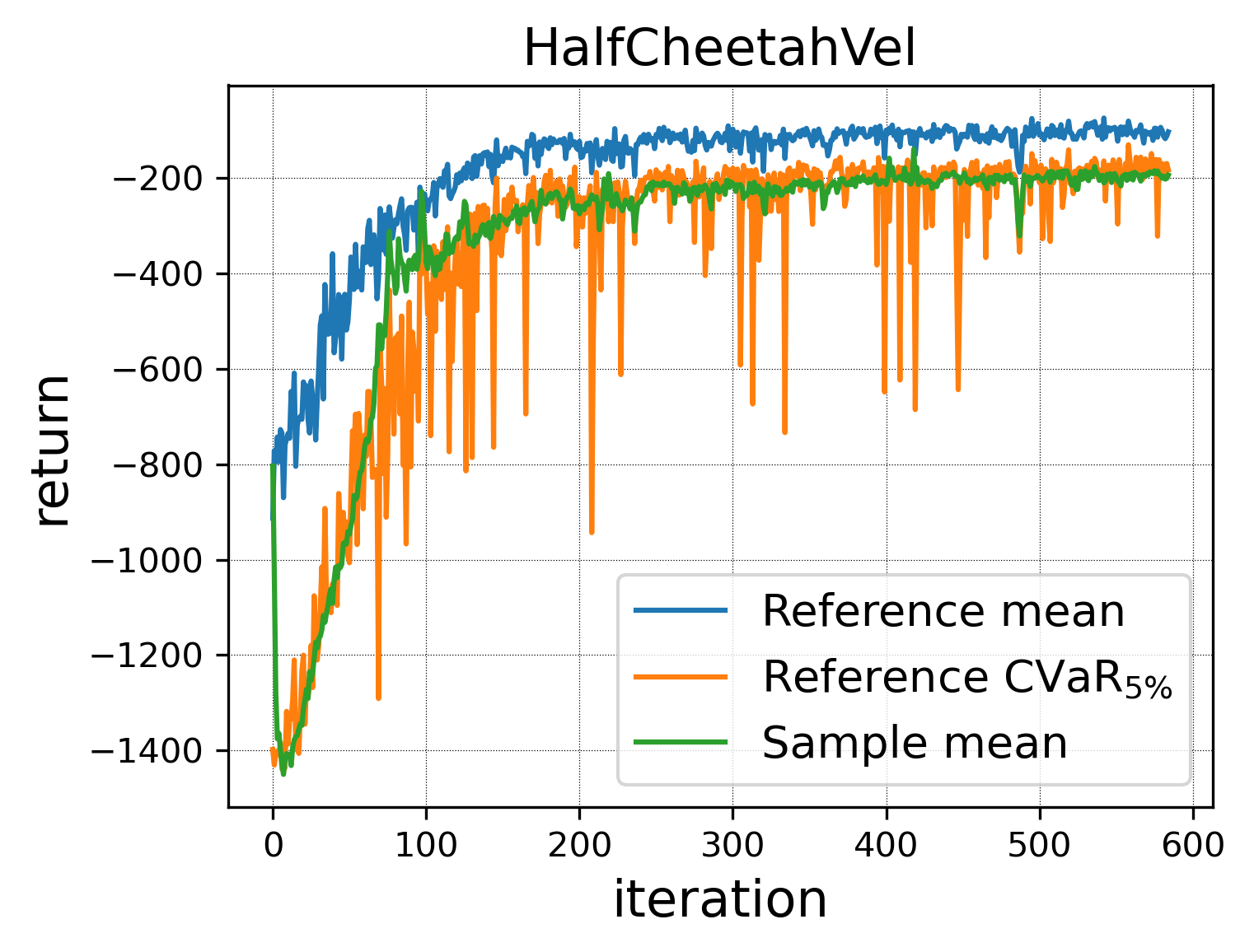}
  \caption{HalfCheetah-Vel}
  \label{fig:hcv_cem}
\end{subfigure}
\begin{subfigure}{.24\linewidth}
  \centering
  \includegraphics[width=1.\linewidth]{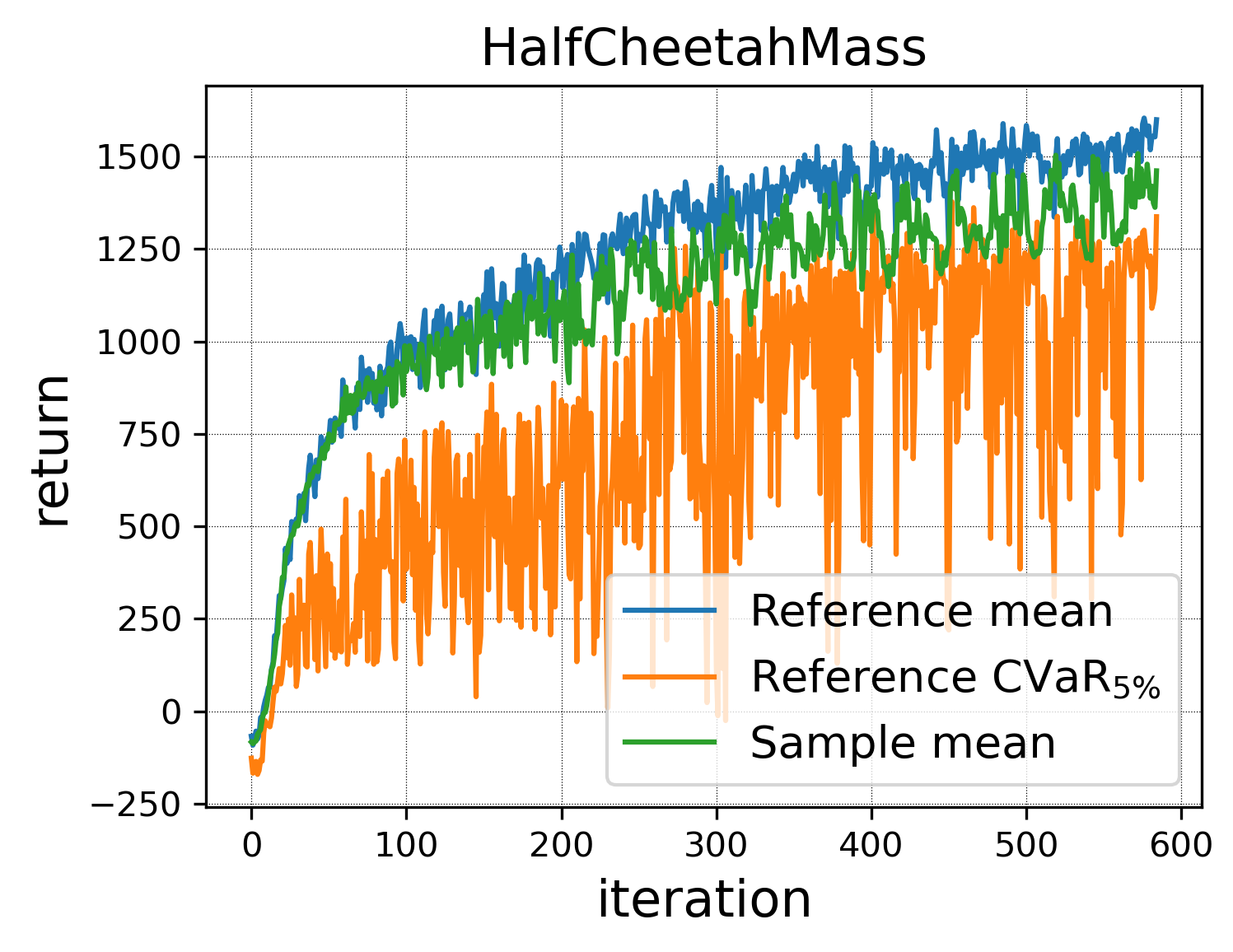}
  \caption{HalfCheetah-Mass}
  \label{fig:hcm_cem}
\end{subfigure}
\begin{subfigure}{.24\linewidth}
  \centering
  \includegraphics[width=1.\linewidth]{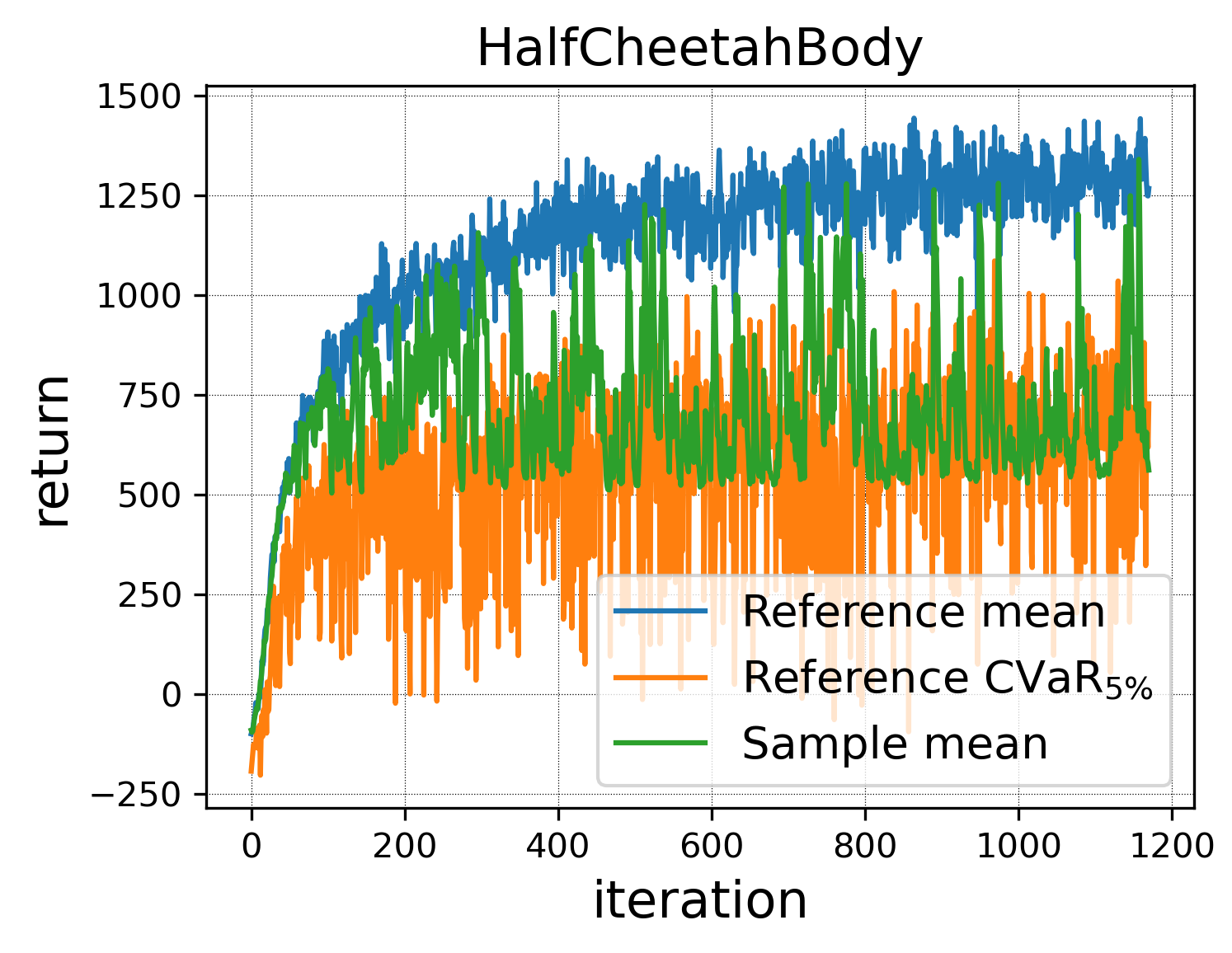}
  \caption{HalfCheetah-Body}
  \label{fig:hcb_cem}
\end{subfigure}
\begin{subfigure}{.24\linewidth}
  \centering
  \includegraphics[width=1.\linewidth]{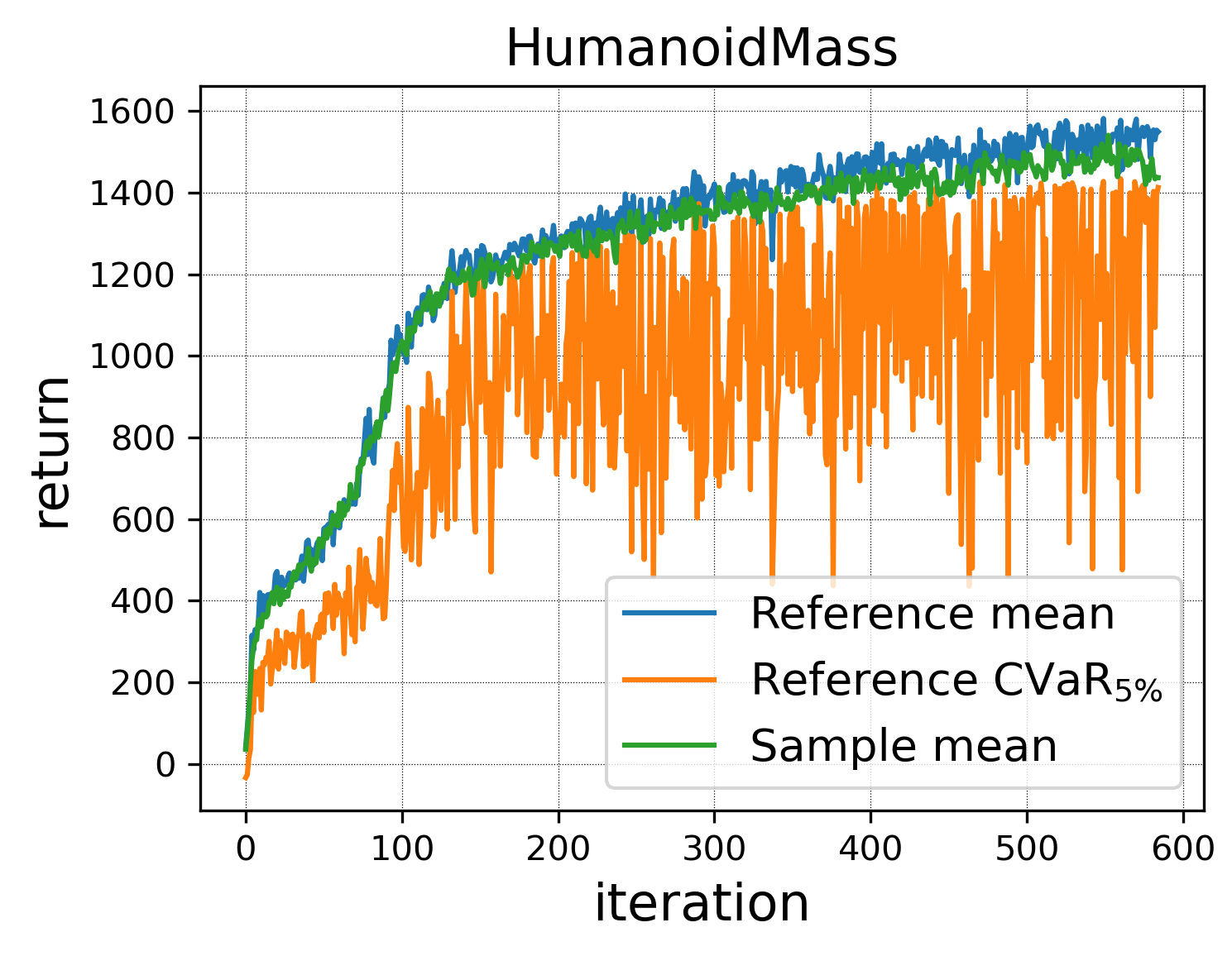}
  \caption{Humanoid-Mass}
  \label{fig:humm_cem}
\end{subfigure}
\caption{\small Sample returns in MuJoCo benchmarks.}
\label{fig:mujoco_cem}
\end{figure}

\begin{figure}[!h]
  \centering
  \includegraphics[width=1.\linewidth]{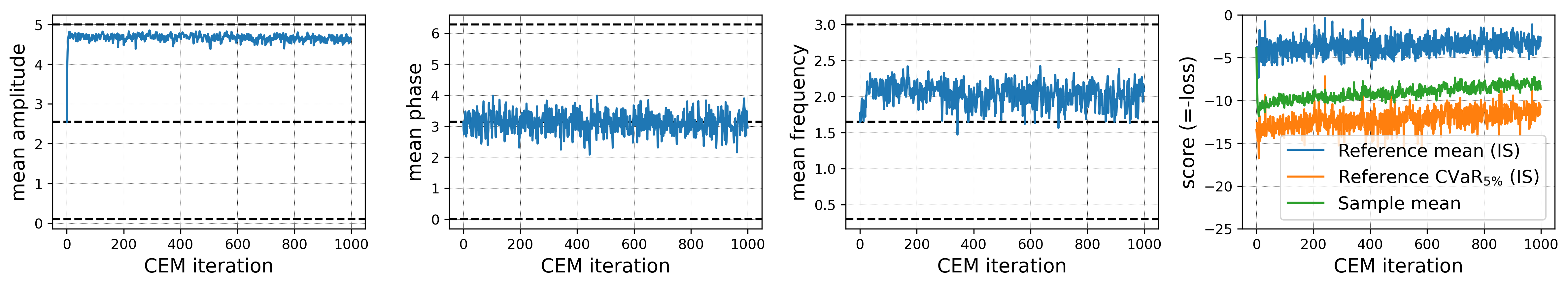}
  \caption{\small Sine Regression: The 3 left figures show the learned sample distribution, corresponding to average amplitudes, phases and frequencies. The CEM immediately learns that the amplitude has the strongest effect on test loss, whereas frequency has a moderate effect and phase has none. The right figure shows in green the mean sample scores (in supervised learning these are the negative losses instead of the returns).}
  \label{fig:sine_cem}
\end{figure}


\FloatBarrier

\subsection{Ablation Test: RoML with a Naive Sampler}
\label{app:ablation_cem}

RoML learns to be robust to high-risk tasks by using the CEM to over-sample them throughout training.
The dynamic CEM presented in \cref{sec:roml} is designed to identify and sample high-risk tasks from a possibly-infinite task-space, where the task values change with the evolving policy throughout training.

In this section, we conduct an ablation test to demonstrate the importance of the CEM to this task.
To that end, we implement a naive adversarial task sampler.
The first $\mathcal{M}=100$ tasks are sampled from the original distribution $D$, and the naive sampler memorizes the $\alpha \mathcal{M}$ lowest-return tasks, and samples randomly from them for the rest of the training.

As displayed in \cref{tab:ablation}, switching to the naive sampler decreases the CVaR returns significantly.

\begin{table}[h]
\caption{Ablation test: CVaR$_{0.05}$ return, compared to the naive sampler baseline.}
\label{tab:ablation}
\centering
\small\addtolength{\tabcolsep}{-1pt}
\begin{tabular}{|l|c|ccc|}
\toprule
& \multicolumn{1}{c|}{HalfCheetah}                              & \multicolumn{3}{c|}{HalfCheetah 10D-task}                      \\
\multicolumn{1}{|c|}{}                               & Body               & (a)                 & (b)                 & (c)                \\ \midrule
VariBAD                & $835 \pm 30$       & $1126 \pm 6$       & $1536 \pm 39$       & $988 \pm 13$       \\
Naive sampler   & $839 \pm 20$ & $1056 \pm 34$ & $1340 \pm 57$ & $978 \pm 12$ \\ 
RoML (VariBAD)   & $935 \pm 17$ & $1227 \pm 13$ & $1697 \pm 24$ & $999 \pm 20$ \\ 
\bottomrule
\end{tabular}
\end{table}


\FloatBarrier
\section{Supervised Meta-Learning}
\label{app:sine}

Below we provide the complete details for the toy Sine Regression experiment of \cref{sec:sine}.
The input in the problem is $x\in[0,2\pi)$, the desired output is $y=A\sin(\omega x + b)$, and the task is defined by the parameters $\tau=(A,b,\omega)$, distributed uniformly over $\Omega = [0.1,5]\times[0,2\pi]\times[0.3,3]$.
Similarly to \citet{maml}, the model is fine-tuned  for each task via a gradient-descent optimization step over 10 samples $\{(x_i,y_i)\}_{i=1}^{10}$, and is tested on another set of 10 samples. The goal is to find model weights that adapt quickly to new task data. 

CVaR-ML and RoML are implemented with robustness level of $\alpha=0.05$, on top of MAML.
For the CEM of RoML, we re-parameterize the uniform task distribution using Beta distributions (see CEM details below).

As shown in \cref{fig:sine_cem}, RoML learns to focus on tasks (sine functions) with high amplitudes and slightly increased frequencies, without changing the phase distribution.
\cref{fig:sine} displays the test losses over 30 seeds, after meta-training for 10000 tasks.
Similarly to the MRL experiments, again RoML achieves better CVaR losses than both CVaR-ML and the baseline.



\begin{figure}[h]
\centering
\hspace{-10pt}
\begin{subfigure}{.36\columnwidth}
  \centering
  \includegraphics[width=1.\linewidth]{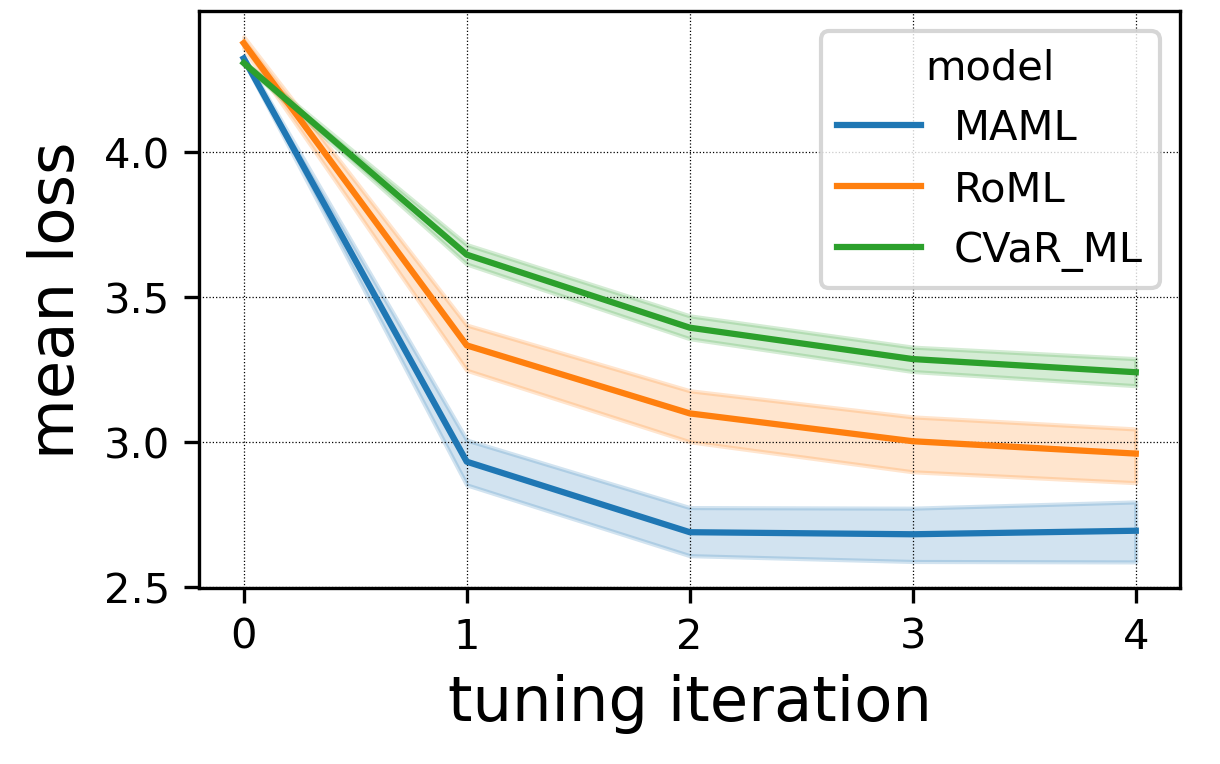}
  \caption{$\mathrm{Mean}$}
  \label{fig:sine_mean}
\end{subfigure}
\hspace{-7pt}
\begin{subfigure}{.36\columnwidth}
  \centering
  \includegraphics[width=1.\linewidth]{Figs/Sine/sine_cvar.png}
  \caption{$\mathtt{CVaR}_{0.05}$}
  \label{fig:sine_cvar}
\end{subfigure}
\hspace{-10pt}
\caption{\small Sine Regression: Mean and CVaR losses over 10000 test tasks, against the number of tuning gradient-steps at test time. The 95\% confidence intervals are calculated over 30 seeds.}
\label{fig:sine}
\end{figure}

\paragraph{CEM implementation details:}
In comparison to \citet{maml}, we added the sine frequency as a third parameter in the task space, since the original problem was to simplistic to pose a mean/CVaR tradeoff.
The tasks are distributed uniformly, and we reparameterize them for the CEM using the Beta distribution, similarly to \cref{app:mujoco}: $\tau=\{\tau_j\}_{j=1}^3,\ \tau_j \sim Beta(2\phi_j, 2-2\phi_j)$.
On top of this, we add a linear transformation from the Beta distribution domain $[0,1]$ to the actual task range ($[0.1,5]$ for amplitude, $[0,2\pi]$ for phase and $[0.3,3]$ for frequency).
Note that the original uniform distribution is recovered by $\phi_0=(0.5,0.5,0.5)$.
The parameter $\phi_j\in[0,1]$, which is controlled by the CEM, equals the expected task $\mathbb{E}_{\tau\sim D_\phi}[\tau_j]$.
The other hyper-parameters of RoML are set to $\beta=0.2$ (CEM quantile) and $\nu=0$ (no regularization).


\end{document}